\documentclass[lettersize,journal]{IEEEtran}

\usepackage[T1]{fontenc}
\usepackage[utf8]{inputenc}

\usepackage{amsmath,amsfonts,amsthm}

\newtheorem{theorem}{Theorem}[section]

\newtheorem{assumption}{Assumption}[section]
\newtheorem{definition}{Definition}[section]
\newtheorem{proposition}{Proposition}[section]

\newtheorem{remark}{Remark}

\usepackage{graphicx}
\usepackage{array}
\usepackage{multirow}
\usepackage{stfloats}
\usepackage{caption}
\usepackage{subcaption}

\renewcommand{\arraystretch}{1.5}
\usepackage{tikz}
\usetikzlibrary{positioning}
\usepackage{pgfplots}
\usepackage{pgfplotstable}
\pgfplotsset{compat=newest}   

\usepackage[ruled,lined,linesnumbered,commentsnumbered,longend]{algorithm2e}
\usepackage{algorithmicx}
\usepackage{algpseudocode}

\SetKwInput{KwInput}{Input}
\SetKwInput{KwOutput}{Output}

\usepackage{cite}
\usepackage{url}
\usepackage[colorlinks=true, allcolors=blue]{hyperref}  

\newcommand{\orcidicon}{%
    \includegraphics[width=0.32cm]{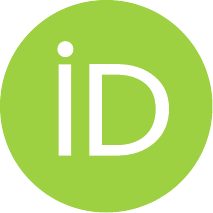}}

\foreach \x in {A,...,Z}{%
    \expandafter\xdef\csname orcid\x\endcsname{%
        \noexpand\href{https://orcid.org/\csname orcidauthor\x\endcsname}%
                      {\noexpand\orcidicon}}}

\usepackage{xcolor}
\usepackage{verbatim}
\usepackage{color,soul}
\usepackage{textcomp}
\usepackage{balance}

\def\BibTeX{{\rm B\kern-.05em{\sc i\kern-.025em b}%
             \kern-.08em T\kern-.1667em\lower.7ex\hbox{E}%
             \kern-.125emX}}

\begin{document}

\title{Entropy-Guided Tensor Compression for Multimodal Federated Learning on Edge Devices} 
\newcommand{\orcidauthorA}{0009-0002-5643-3828} 
\newcommand{\orcidauthorD}{0000-0002-9485-7720} 

\author{Quoc Bao Phan$^{\orcidA{}}$, \textit{Graduate Student Member, IEEE}, and Tuy Tan Nguyen$^{\orcidD{}}$, \textit{Senior Member, IEEE}

\thanks{This research was supported by the Department of Electrical and Computer Engineering, Center for Advanced Power Systems, FAMU-FSU College of Engineering, Florida State University.~{\it (Corresponding author: Tuy Tan Nguyen.)}}
\thanks{The authors are with the Department of Electrical and Computer Engineering, Center for Advanced Power Systems, FAMU-FSU College of Engineering, Florida State University, Tallahassee, FL 32310, USA. (e-mail: qp25c@fsu.edu, tuy.nguyen@fsu.edu).}

\markboth{IEEE Internet of Things Journal}%
{Shell \MakeLowercase{\textit{et al.}}: Bare Demo of IEEEtran.cls for IEEE Journals}}

\maketitle

\begin{abstract}
Federated learning (FL) over mobile and edge devices increasingly involves multimodal models in which clients differ in both sensing capability and computational capacity. Existing update compression schemes typically apply uniform policies across layers and devices, without accounting for modality-specific differences in spectral structure and compressibility. We propose MESH-FL, an entropy-guided matrix product state (MPS) update-compression framework for modality-heterogeneous FL on resource-constrained devices. MESH-FL estimates the spectral entropy of each layer-wise update via truncated singular value decomposition and allocates MPS compression ranks adaptively across layers, modalities, and devices under per-client payload budgets. We show that higher spectral entropy necessitates a higher reconstruction rank under the majorization order on singular-value energy distributions. Building on this result, we prove that the proposed entropy-guided allocation solves a convex surrogate rank-allocation problem, preserves monotonicity under the exact payload model, and achieves convergence with an explicit compression-dependent error term. Experiments on a 15-node heterogeneous Raspberry Pi~4/5 cluster with modality-heterogeneous clients show that MESH-FL achieves up to $56.8\times$ compression while surpassing the uncompressed FedAvg baseline in final accuracy by up to 2.01\%, and reduces total transmitted data to reach convergence by up to $66\times$.
\end{abstract}

\begin{IEEEkeywords}
Federated learning, update compression, spectral entropy,
matrix product state, heterogeneous devices
\end{IEEEkeywords}

\section{Introduction}

Federated learning (FL) has emerged as an important framework for training models across large collections of mobile and edge devices without centralizing raw data \cite{PHAN2026133402, chentmc26}. Its appeal is particularly strong in sensing applications, where data are generated continuously on user devices and are often subject to privacy, bandwidth, and storage constraints \cite{niu2026energy}. At the same time, many edge applications are inherently multimodal. Human activity recognition, mobile health, intelligent environments, and audio-visual perception often rely on multiple sensing streams, while the devices participating in training may not share the same sensing hardware \cite{10938626}. As a result, practical FL deployments increasingly operate under modality heterogeneity, where clients differ not only in computation and communication resources, but also in the modalities they can observe and contribute during training.

This setting gives rise to a communication problem that is more structured than in conventional FL. In cross-device systems, transmitting model updates already accounts for a large fraction of the training cost, especially when clients are resource-constrained and communicate over wireless links \cite{10379499}. In multimodal settings, this challenge becomes more pronounced because different branches of the model can produce updates with different statistical and structural properties. Consequently, a compression policy that is adequate for one modality may be unnecessarily aggressive or insufficient for another. Communication efficiency in multimodal FL is therefore not only a systems issue, but also a representation issue: the compression mechanism should reflect the structure of the layer-wise updates being transmitted.

Existing update compression methods, including sparsification, quantization, and low-rank approximation, have shown that substantial reductions in communication are possible in FL \cite{jia2025communication, 10812017}. However, most existing schemes apply a largely uniform compression policy across layers or across clients, with limited regard for differences in update structure or device capability \cite{10038471}. In heterogeneous edge environments, such uniformity can be inefficient in two ways. First, it can misallocate compression budget by treating updates of different complexity similarly, even though some updates admit stronger compression than others. Second, it can overlook per-device resource constraints, making the same compression choice suboptimal across weak and capable devices. These limitations are particularly relevant in modality-heterogeneous FL, where both model structure and hardware variability shape the communication bottleneck \cite{s23156986}.

Our work is motivated by the observation that the spectral entropy of a layer-wise update matrix, derived from the normalized singular-value energy distribution, provides a compact measure of compressibility: low entropy indicates energy concentrated in a few modes, whereas high entropy reflects a more diffuse spectrum and greater resistance to low-rank approximation. Building on this view, we show that higher spectral entropy generally requires higher reconstruction rank under the natural majorization order on singular-value energy distributions, which motivates entropy-guided rank selection across layers, modalities, and devices. Based on this principle, we develop \textbf{MESH-FL}, an entropy-guided and device-aware compression framework for modality-heterogeneous FL that combines per-layer entropy estimation, adaptive rank allocation under per-client payload budgets, and matrix product state (MPS)-based update compression within a unified pipeline. Instead of enforcing a uniform compression level across all clients and model components, MESH-FL adapts compression to both update complexity and device capability; we further analyze its optimization behavior through an explicit compression-dependent approximation term and evaluate it on a heterogeneous edge testbed and a multimodal benchmark, where it reduces communication while surpassing the uncompressed baseline in accuracy.

Our main contributions are summarized as follows:

\begin{itemize}
    \item We formulate communication-efficient FL under joint modality and device heterogeneity, showing why uniform compression is poorly matched to heterogeneous edge settings.

    \item We introduce a spectral-entropy-guided view of update compression and show that higher spectral entropy implies weakly larger reconstruction rank under the majorization order.

    \item We develop MESH-FL, an entropy-guided MPS compression framework that allocates ranks adaptively across layers, modalities, and clients under per-client payload budgets.

    \item We provide convergence analysis with an explicit compression-dependent error term and validate MESH-FL on heterogeneous Raspberry Pi hardware, demonstrating reduced communication and improved accuracy over baselines.
\end{itemize}

The remainder of this paper is organized as follows. Section~\ref{sec:related} reviews related work and background on update compression and heterogeneous FL. Section~\ref{sec:background} defines the problem formulation. Section~\ref{sec:system} presents the MESH-FL framework and algorithm. Section~\ref{sec:theory} provides the theoretical analysis. Section~\ref{sec:experiments} reports the experimental results. Section~\ref{sec:conclusion} concludes the paper.

\section{Related Work}\label{sec:related}
 
\subsection{Update Compression in Federated Learning}
 
Reducing the per-round communication cost in FL has been pursued through
sparsification, quantization, and low-rank approximation.
Sparsification methods transmit only a subset of update coordinates selected
by magnitude threshold or random sampling~\cite{lin2018dgc,stich2018sparsified}.
With error feedback, these methods converge at rates comparable to dense
communication, but they treat each coordinate independently and cannot exploit
the structural redundancy within weight matrices.
Quantization reduces the bit-width of transmitted values. QSGD~\cite{alistarh2017qsgd}
provides stochastic rounding with variance-bounded convergence guarantees;
TernGrad~\cite{wen2017terngrad} pushes quantization to three levels for
extreme compression. Both are agnostic to the spectral organization of updates
and offer no mechanism to allocate precision proportionally to layer complexity.
 
Low-rank approximation methods are structurally closer to our approach.
PowerSGD~\cite{vogels2019powersgd} compresses gradient matrices via randomized
power iteration and has demonstrated strong empirical compression ratios on
large-scale models. FedPAQ~\cite{reisizadeh2020fedpaq} combines periodic
averaging with quantization. However, existing methods largely apply uniform
rank or budget policies across layers and devices, without adapting to
layer-wise spectral structure or per-client resource constraints. MESH-FL
departs from this uniformity by using spectral entropy to allocate MPS rank
adaptively across layers, modalities, and heterogeneous devices.
 
\subsection{Tensor Decompositions in Neural Networks and Federated Learning}
 
Tensor decompositions have been used extensively for compressing neural network weights~\cite{novikov2015tensorizing,lebedev2015speedingup} and, more recently, for compressing gradients in distributed optimization. The MPS, also known as the tensor-train (TT) decomposition~\cite{oseledets2011tensortraindecomp}, factorizes a reshaped weight or gradient tensor into a chain of three-dimensional cores, enabling compression ratios that scale favorably with tensor order. Authors in~\cite{tjandra2017compressing} applied TT decomposition to recurrent weights; \cite{yang2017tensortrain} extended this to fully connected layers for edge deployment. For a third-order reshaping, the transmitted MPS payload depends directly on the selected bond rank through the core sizes, making rank selection the main mechanism that controls the communication--reconstruction tradeoff. In FL, tensor-based compression has been explored in server-side aggregation and model pruning contexts, but prior work fixes decomposition rank globally and does not adapt to per-layer spectral complexity or per-device payload budgets. MESH-FL combines entropy-guided adaptive MPS rank selection with device-aware budget allocation for modality-heterogeneous FL.
 
\subsection{Heterogeneous Federated Learning}
 
System heterogeneity, where clients differ in computation speed and communication bandwidth, has been addressed through asynchronous aggregation~\cite{xie2019asynchronous}, partial model training~\cite{diao2020heterofl}, and proximal
regularization~\cite{li2020fedprox}. Statistical heterogeneity (non-independent and identically distributed (non-IID) data distributions) has been studied through variance reduction~\cite{karimireddy2020scaffold},
gradient correction, and data-sharing strategies~\cite{zhao2018federated}. These works treat heterogeneity either in the computational or statistical dimension alone and do not address heterogeneous update compression under per-client payload constraints.
 
Modality heterogeneity in FL, where clients differ in their sensing hardware, is less studied. Concurrent work has considered partial-modality participation and modality dropout in cross-silo settings~\cite{sun2024towards}, but without adaptive compression. MESH-FL addresses modality heterogeneity, device heterogeneity, and communication efficiency jointly within a single framework,
providing layer-wise aggregation over modality-eligible clients and entropy-guided rank allocation that respects device-specific payload limits.

\section{Problem Formulation}\label{sec:background}

\subsection{FL Objective}

Consider an FL system with a central server and a set of $K$ clients, indexed by $\mathcal{K}=\{1,2,\ldots,K\}$. Each client $k$ holds a local dataset $\mathcal{D}_k$ of $n_k$ samples, and the datasets are distributed in a non-IID manner across clients. The global learning objectives is:
\begin{equation}
\min_{\mathbf{w}} F(\mathbf{w})
=\sum_{k=1}^{K}\frac{n_k}{n}F_k(\mathbf{w}),
\end{equation}
where \(F_k(\mathbf{w})=\frac{1}{n_k}\sum_{i\in\mathcal{D}_k}\ell(\mathbf{w};x_i)\) is the empirical local objective of client \(k\), \(n=\sum_{k=1}^{K}n_k\) is the total number of samples, \(\ell(\mathbf{w};x_i)\) is the per-sample loss, and \(\mathbf{w}\in\mathbb{R}^d\) denotes the global model parameters. In communication round $t$, the server broadcasts the current global model $\mathbf{w}^t$ to a selected subset $\mathcal{S}^t\subseteq\mathcal{K}$ of participating clients. Each client $k\in\mathcal{S}^t$ performs $E$ steps of local stochastic gradient descent starting from $\mathbf{w}^t$, producing a locally updated model $\mathbf{w}_k^t$. The transmitted local update is the accumulated model delta
\begin{equation}
\Delta\mathbf{w}_k^t=\mathbf{w}^t-\mathbf{w}_k^t,
\end{equation}
which captures the net parameter change after $E$ local steps. The server aggregates these deltas over participating clients only:
\begin{equation}
\mathbf{w}^{t+1}
=\mathbf{w}^t-\sum_{k\in\mathcal{S}^t}\frac{n_k}{n_{\mathcal{S}^t}}\Delta\mathbf{w}_k^t,
\end{equation}
where $n_{\mathcal{S}^t}=\sum_{k\in\mathcal{S}^t}n_k$ is the total number of samples held by participating clients at round $t$. For each layer $\ell$, let $G_k^{(\ell)}\in\mathbb{R}^{m_\ell\times n_\ell}$ denote the layer-wise component of the local update, defined as the difference between the global and locally updated parameters of that layer after $E$ steps. This layer-wise update matrix is compressed before transmission.

\subsection{Modality-Heterogeneous Clients}

We consider a setting in which clients differ in sensing capability. Let $\mathcal{M}$ denote the set of available modalities. Each client $k$ has access to a subset $\mathcal{M}_k\subseteq\mathcal{M}$ determined by its hardware configuration. The global model consists of modality-specific encoder branches and a shared fusion module. Formally,
\begin{equation}
\mathbf{w}=\left\{\mathbf{w}^{(\ell)}\right\}_{\ell=1}^{L},
\end{equation}
where $L$ is the total number of layers and each $\mathbf{w}^{(\ell)}$ belongs either to a modality-specific encoder or to the shared fusion network. Client $k$ computes local updates only for layers associated with its available modalities $\mathcal{M}_k$. Layers corresponding to absent modalities produce no update contribution from client $k$. Let $\mathcal{L}_k$ denote the set of layers updated by client $k$. This naturally partitions clients by modality profile, with unimodal clients contributing updates for a single encoder branch and multimodal clients contributing updates for multiple branches and the fusion module.

\subsection{Device Heterogeneity}

Beyond modality heterogeneity, clients also differ in computation and memory capacity. We characterize each client $k$ by a compression budget $C_k$, defined as the maximum total number of scalar values that may be transmitted per round. This budget serves as a unified abstraction of uplink bandwidth and on-device memory constraints, and is consistently interpreted throughout the paper as a scalar payload limit. We do not assume prior knowledge of the data distribution or update statistics at the server; all compression decisions are made locally at each client based on its own updates and device profile.

\subsection{MPS Update Compression}

Before transmission, each layer-wise update matrix $G_k^{(\ell)}\in\mathbb{R}^{m_\ell\times n_\ell}$ is compressed using a matrix product state (MPS) decomposition. Since MPS operates on higher-order tensors, the update matrix is first reshaped into a third-order tensor $\mathcal{G}_k^{(\ell)}\in\mathbb{R}^{m_1\times m_2\times n_\ell}$, where $m_1$ and $m_2$ are chosen such that $m_1m_2\geq m_\ell$ and $|m_1-m_2|$ is minimized. When $m_1m_2>m_\ell$, the matrix is zero-padded along its first dimension before reshaping, and the padded rows are discarded after reconstruction. This tensorization scheme is the one adopted in our framework; other reshaping strategies are possible but are not considered here. The MPS decomposition expresses this tensor as:
\begin{equation}
\mathcal{G}_k^{(\ell)}(i_1,i_2,i_3)
\approx
\mathbf{A}^{(1)}[i_1]\,
\mathbf{A}^{(2)}[i_2]\,
\mathbf{A}^{(3)}[i_3],
\end{equation}
where $\mathbf{A}^{(1)}\in\mathbb{R}^{m_1\times1\times r}$, $\mathbf{A}^{(2)}\in\mathbb{R}^{r\times m_2\times r}$, and $\mathbf{A}^{(3)}\in\mathbb{R}^{r\times n_\ell\times1}$, with $r=r_k^{(\ell)}$ denoting the bond dimension, or MPS rank. The total payload size of the compressed representation is
\begin{equation}
\phi\!\left(m_\ell,n_\ell,r_k^{(\ell)}\right)=m_1r+m_2r^2+rn_\ell,
\end{equation}
where the dominant term $m_2r^2$ arises from the middle core $\mathbf{A}^{(2)}$, making the payload size quadratic in $r$. The corresponding compression ratio is
\begin{equation}
\rho_k^{(\ell)}
=
\frac{m_\ell n_\ell}{\phi(m_\ell,n_\ell,r_k^{(\ell)})}.
\end{equation}
Thus, $r_k^{(\ell)}$ is the primary decision variable governing the trade-off between payload size and reconstruction fidelity.

\subsection{Problem Statement}

The central challenge is to select compression ranks $\{r_k^{(\ell)}\}$ across clients and layers so as to minimize total update reconstruction error subject to per-client compression budgets:
\begin{align}
\min_{\left\{r_k^{(\ell)}\right\}}
\sum_{k\in\mathcal{S}^t}\frac{n_k}{n_{\mathcal{S}^t}}
\sum_{\ell\in\mathcal{L}_k}\varepsilon_{k,\ell}^2\!\left(r_k^{(\ell)}\right)
\quad \notag \\
\text{s.t.}
\quad
\sum_{\ell\in\mathcal{L}_k}\phi\!\left(m_\ell,n_\ell,r_k^{(\ell)}\right)\leq C_k,
\quad
\forall k\in\mathcal{S}^t,
\end{align}
where
$\varepsilon_{k,\ell}(r)
=
\frac{\|G_k^{(\ell)}-\hat{G}_k^{(\ell)}\|_F}{\|G_k^{(\ell)}\|_F}$ is the relative reconstruction error of layer $\ell$ at client $k$ under rank-$r$ MPS compression, and $\hat{G}_k^{(\ell)}$ denotes the reconstructed update. The constraint is dimensionally consistent: the left-hand side counts the total number of transmitted scalars under MPS compression, and $C_k$ is measured in the same units. This formulation captures both sources of heterogeneity jointly: the constraint set is client-specific through $C_k$, while the objective reflects modality participation through $\mathcal{L}_k$.

Solving this problem requires understanding how $\varepsilon_{k,\ell}(r)$ depends on the intrinsic structure of the update $G_k^{(\ell)}$. The key insight developed in Section~\ref{sec:theory} is that the spectral entropy of $G_k^{(\ell)}$ provides a principled basis for rank selection. In our framework, spectral entropy is estimated efficiently via a truncated singular value decomposition that retains only the top 
$q$ singular values to estimate the singular-value energy distribution, where $q\ll\min(m_\ell,n_\ell)$, thereby avoiding the cost of full decomposition at runtime, as detailed in Section~\ref{sec:system}.

\section{MESH-FL Framework}\label{sec:system}

MESH-FL operates in three steps per round: each participating client estimates the spectral entropy of its layer-wise updates via truncated SVD, allocates entropy-guided MPS ranks under its local payload budget $C_k$, and compresses active updates into MPS cores for transmission. The server reconstructs the received updates, aggregates them layer-wise across eligible clients, and broadcasts the updated global model. Only global parameters in modality-specific encoders and the shared fusion module are transmitted, while local classifier and fusion-adapter parameters remain on-device.

\subsection{Spectral Entropy Estimation}
For each active layer $\ell\in\mathcal{L}_k$, client $k$ estimates the spectral entropy of its update matrix $G_k^{(\ell)}\in\mathbb{R}^{m_\ell\times n_\ell}$ via truncated SVD.
Let
\begin{align}
\hat{\sigma}_1\geq \hat{\sigma}_2\geq \cdots \geq \hat{\sigma}_q
\end{align}
denote the top $q$ singular values of $G_k^{(\ell)}$, where $q\ll\min(m_\ell,n_\ell)$ is a fixed truncation parameter. The normalized singular-value energy distribution over the retained components is
\begin{align}
\hat{q}_i=\frac{\hat{\sigma}_i^2}{\sum_{j=1}^{q}\hat{\sigma}_j^2},
\quad i=1,\ldots,q,
\end{align}
and the resulting truncated spectral entropy estimator is
\begin{align}
\hat{H}_k^{(\ell)}=-\sum_{i=1}^{q}\hat{q}_i\log \hat{q}_i.
\end{align}

We use $\hat{H}_k^{(\ell)}$ as a computational proxy for the true spectral entropy $H_k^{(\ell)}$ defined over the full singular-value energy distribution. The quality of this proxy depends on the omitted tail energy
\begin{align}
\tau_k^{(\ell)}=
\frac{\sum_{i>q}\sigma_i^2}{\|G_k^{(\ell)}\|_F^2},
\end{align}
which measures the fraction of spectral energy not captured by the top $q$ singular values. When $\tau_k^{(\ell)}$ is small, the truncated entropy $\hat{H}_k^{(\ell)}$ approximates $H_k^{(\ell)}$ up to a controlled deviation. In particular, the entropy-guided rank ordering induced by $\hat{H}$ is stable under small tail energy, and pairwise ordering is preserved whenever the true entropy gaps exceed the corresponding perturbation bound. 

In practice, $q$ is chosen so that the retained singular values capture the dominant spectral mass of each update and the resulting entropy estimates are stable across rounds. Using truncated rather than full SVD reduces the per-layer cost from $O(m_\ell n_\ell \min(m_\ell,n_\ell))$ to $O(m_\ell n_\ell q)$ with randomized SVD, which is practical on resource-constrained devices. Entropy estimation is performed independently for each active layer, allowing rank allocation to adapt to layer-specific and modality-specific update structure.

\subsection{Device-Aware Rank Allocation}

Given the entropy estimates $\{\hat{H}_k^{(\ell)}\}_{\ell\in\mathcal{L}_k}$ and payload budget $C_k$, client $k$ allocates MPS ranks across its active layers. As established in Section~\ref{sec:theory}, the nominal continuous allocation is monotone in spectral entropy, and under the linearized payload model admits the closed-form scaling $r_k^{(\ell)}\propto e^{\hat{H}_k^{(\ell)}/2}$. We therefore define
\begin{align}
\bar{r}_k^{(\ell)}=\alpha_k e^{\hat{H}_k^{(\ell)}/2},
\quad \ell\in\mathcal{L}_k,
\end{align}
where $\alpha_k>0$ is a client-specific scaling factor chosen so that the resulting allocation matches the available payload budget as closely as possible under the MPS payload model. Since the payload function
\begin{align}
\phi(m_\ell,n_\ell,r)=m_1r+m_2r^2+rn_\ell
\end{align}
is monotone increasing in $r$, the scaling factor $\alpha_k$ can be obtained efficiently by bisection on
\begin{align}
\sum_{\ell\in\mathcal{L}_k}\phi\!\left(m_\ell,n_\ell,\bar{r}_k^{(\ell)}\right)\leq C_k.
\end{align}

To obtain implementable integer ranks, the continuous allocation is projected onto device-feasible bounds:
\begin{align}
\tilde{r}_k^{(\ell)}
=
\Pi_{\mathbb{Z}\cap[r_{\min},\,r_{\max}^{(k)}]}
\!\left(\bar{r}_k^{(\ell)}\right),
\end{align}
where $r_{\min}$ is a global minimum rank and $r_{\max}^{(k)}$ is a device-specific maximum rank determined by client $k$'s available working memory. Because integer projection may slightly violate the payload budget, we apply a lightweight feasibility correction: if
\begin{align}
\sum_{\ell\in\mathcal{L}_k}\phi\!\left(m_\ell,n_\ell,\tilde{r}_k^{(\ell)}\right)>C_k,
\end{align}
the client iteratively decreases the ranks of the lowest-entropy layers until the budget is satisfied; if residual budget remains, it is reassigned to the highest-entropy layers while respecting $r_{\max}^{(k)}$. The final ranks are denoted by $\{r_k^{(\ell)}\}_{\ell\in\mathcal{L}_k}$.

This procedure preserves the entropy-guided structure implied by the theory while ensuring exact budget feasibility under integer ranks and device bounds. Layers with higher spectral entropy receive higher rank, layers with lower entropy are compressed more aggressively, and every client remains within its payload limit. For bias vectors and other one-dimensional parameter tensors, we directly assign rank $r_{\min}$, since their small size makes both entropy estimation and compression gains marginal.

\subsection{MPS Compression and Decompression}

Given the allocated rank $r_k^{(\ell)}$ for each active layer, client $k$ compresses the update matrix $G_k^{(\ell)}$ using the MPS construction introduced in Section~\ref{sec:background}. If necessary, the matrix is zero-padded, reshaped into a third-order tensor, and factorized through two sequential truncated SVD steps to produce the cores $\mathbf{A}^{(1)}$, $\mathbf{A}^{(2)}$, and $\mathbf{A}^{(3)}$. The singular values from each intermediate decomposition are absorbed into the subsequent core so that the server can reconstruct the update directly from the transmitted cores without storing singular values separately. The total transmitted payload for layer $\ell$ at client $k$ is therefore
\begin{equation}
\phi\!\left(m_\ell,n_\ell,r_k^{(\ell)}\right)=m_1r_k^{(\ell)}+m_2\bigl(r_k^{(\ell)}\bigr)^2+r_k^{(\ell)}n_\ell.
\end{equation}

At the server, decompression contracts the received cores sequentially to recover $\hat{G}_k^{(\ell)}$, removes any padded rows, and reshapes the result to the original layer dimensions. Since decompression is performed on the server side, its overhead is modest relative to server-scale compute resources. The reconstructed layer-wise updates are then aggregated across clients as in FL, with the important distinction that modality-specific layers are aggregated only over clients that actually contribute to them.

\subsection{Layer-Wise Aggregation Under Modality Heterogeneity}

Because not every client updates every layer, aggregation must be defined layer-wise. For each layer $\ell$, let $\mathcal{S}_\ell^t=\{k\in\mathcal{S}^t:\ell\in\mathcal{L}_k\}$ denote the set of participating clients that contribute an update to layer $\ell$, and let $n_{\mathcal{S}_\ell^t}=\sum_{k\in\mathcal{S}_\ell^t} n_k$ denote the total number of samples represented by those clients. The server then computes the aggregated update for layer $\ell$ as
\begin{align}
\hat{G}^{(\ell)}=
\sum_{k\in\mathcal{S}_\ell^t}
\frac{n_k}{n_{\mathcal{S}_\ell^t}}
\hat{G}_k^{(\ell)}.
\end{align}
If $\mathcal{S}_\ell^t=\emptyset$, no client contributes to layer $\ell$ in round $t$, and the corresponding parameters remain unchanged. This layer-wise normalization avoids diluting modality-specific updates by averaging over clients that do not possess the relevant sensing branch.

\subsection{Complete Protocol}
Algorithm~\ref{alg:mesh-fl} summarizes the complete MESH-FL workflow. In each communication round, the server broadcasts the current global model to the selected clients, and each client performs $E$ local SGD steps to obtain layer-wise updates only for the layers in its modality-dependent active set $\mathcal{L}_k$. For each active layer, the client estimates the truncated spectral entropy $\hat{H}_k^{(\ell)}$ from the top-$q$ singular-value energy distribution and uses these estimates to allocate MPS ranks $\{r_k^{(\ell)}\}_{\ell\in\mathcal{L}_k}$ under the payload budget $\sum_{\ell\in\mathcal{L}_k}\phi(m_\ell,n_\ell,r_k^{(\ell)})\le C_k$. The resulting ranks assign higher reconstruction capacity to spectrally diffuse updates and stronger compression to low-entropy updates. Clients then transmit only the compressed MPS cores, while the server reconstructs the updates and aggregates each layer over the modality-eligible clients $S_\ell^t={k\in S^t:\ell\in\mathcal{L}_k}$. Thus, compared with standard FedAvg, MESH-FL jointly introduces entropy-guided rank allocation, device-aware MPS update compression, and layer-wise aggregation under modality heterogeneity.

\begin{algorithm}[!t]
\DontPrintSemicolon
\SetNoFillComment
\caption{MESH-FL Work Flow}
\label{alg:mesh-fl}
\KwIn{$\mathbf{w}^0$, rounds $T$, local steps $E$, truncation $q$, rank bounds $r_{\min},\{r_{\max}^{(k)}\}$, budgets $\{C_k\}$}
\KwOut{$\mathbf{w}^T$}

\For{$t=1,\ldots,T$}{
    Server selects $\mathcal{S}^t\subseteq\mathcal{K}$ and broadcasts $\mathbf{w}^t$\;

    \ForEach{$k\in\mathcal{S}^t$ in parallel}{
        Run $E$ local SGD steps from $\mathbf{w}^t$ to obtain $\mathbf{w}_k^t$\;

        \ForEach{$\ell\in\mathcal{L}_k$}{
            $G_k^{(\ell)}\leftarrow \mathbf{w}^{t,(\ell)}-\mathbf{w}_k^{t,(\ell)}$\;
            Estimate $\hat{H}_k^{(\ell)}$ from the top-$q$ singular-value energy of $G_k^{(\ell)}$\;
        }

        Allocate $\{r_k^{(\ell)}\}_{\ell\in\mathcal{L}_k}$ using $\{\hat{H}_k^{(\ell)}\}$ subject to
        $\sum_{\ell\in\mathcal{L}_k}\phi(m_\ell,n_\ell,r_k^{(\ell)})\leq C_k$\;

        Compress and transmit MPS cores of $G_k^{(\ell)}$ at rank $r_k^{(\ell)}$ for all $\ell\in\mathcal{L}_k$\;
        Update $\mathbf{w}_k^{\mathrm{loc}}$\;
    }

    \ForEach{$\ell=1,\ldots,L$}{
        $\mathcal{S}_\ell^t\leftarrow\{k\in\mathcal{S}^t:\ell\in\mathcal{L}_k\}$\;
        \eIf{$\mathcal{S}_\ell^t\neq\emptyset$}{
            $\hat{G}^{(\ell)}\leftarrow
            \sum_{k\in\mathcal{S}_\ell^t}
            \frac{n_k}{n_{\mathcal{S}_\ell^t}}
            \operatorname{MPSDec}\!\left(\operatorname{cores}_k^{(\ell)}\right)$\;
            $\mathbf{w}^{t+1,(\ell)}\leftarrow \mathbf{w}^{t,(\ell)}-\hat{G}^{(\ell)}$\;
        }{
            $\mathbf{w}^{t+1,(\ell)}\leftarrow \mathbf{w}^{t,(\ell)}$\;
        }
    }
}
\KwRet{$\mathbf{w}^T$}
\end{algorithm}

\subsection{Complexity Analysis}

Relative to standard FedAvg, MESH-FL introduces two main client-side overheads: entropy estimation and MPS compression. For client $k$, truncated-SVD-based entropy estimation has complexity
\begin{equation}
\sum_{\ell\in\mathcal{L}_k}O(m_\ell n_\ell q),
\end{equation}
where $q\ll\min(m_\ell,n_\ell)$. The MPS compression cost is approximately
\begin{equation}
\sum_{\ell\in\mathcal{L}_k}O(m_\ell n_\ell r_k^{(\ell)}),
\end{equation}
which is comparable to low-rank factorization at the target rank. The rank-allocation step is lightweight relative to these matrix operations, since it consists of entropy-weight computation, one-dimensional bisection for $\alpha_k$, and a small feasibility correction over active layers.

On the server side, decompression and aggregation incur
\begin{equation}
\sum_{\ell=1}^{L}\sum_{k\in\mathcal{S}_\ell^t}
O\!\left(m_1 m_2 \bigl(r_k^{(\ell)}\bigr)^2
+ m_1 m_2\, r_k^{(\ell)} n_\ell\right),
\end{equation}
which is modest at server scale. The main efficiency gain remains in communication: the payload per client per round is reduced from
\begin{equation}
\sum_{\ell\in\mathcal{L}_k}m_\ell n_\ell
\end{equation}
to
\begin{equation}
\sum_{\ell\in\mathcal{L}_k}\phi(m_\ell,n_\ell,r_k^{(\ell)}),
\end{equation}
with larger savings obtained when the allocated ranks remain small relative to the layer dimensions.

\section{Theoretical Analysis}\label{sec:theory}
This section provides the theoretical basis for MESH-FL. We first bound the reconstruction error of rank-constrained MPS compression and express it through an entropy-weighted spectral-complexity coefficient. We then relate spectral entropy to rank complexity, show how compression error enters the FL convergence bound, and prove that the proposed entropy-guided rank allocation solves a convex surrogate problem.

\subsection{Assumptions}\label{sec:assumptions}
We use standard conditions from non-IID FL convergence analysis. Each local objective \(F_k\) is differentiable and \(L\)-smooth, i.e.,
\begin{align}
\|\nabla F_k(\mathbf{u})-\nabla F_k(\mathbf{v})\|\leq L\|\mathbf{u}-\mathbf{v}\|,\quad \forall\,\mathbf{u},\mathbf{v}\in\mathbb{R}^d.
\label{eq:smoothness}
\end{align}
The uncompressed local delta \(G_k^t\), produced after \(E\) local SGD steps from \(\mathbf{w}^t\), satisfies
\begin{align}
\mathbb{E}[G_k^t\mid\mathbf{w}^t]&=\eta E\nabla F_k(\mathbf{w}^t),\label{eq:local-delta-mean}\\
\mathbb{E}\!\left[\|G_k^t-\eta E\nabla F_k(\mathbf{w}^t)\|^2\mid\mathbf{w}^t\right]&\leq\eta^2E^2\sigma^2,
\label{eq:local-delta-var}
\end{align}
where \(\eta\) is the step size and \(\sigma^2\) controls the stochastic local-update variance. Client heterogeneity is bounded by
\begin{align}
\|\nabla F_k(\mathbf{w})-\nabla F(\mathbf{w})\|^2\leq\delta^2,\quad
\nabla F(\mathbf{w})=\sum_{k=1}^K\frac{n_k}{n}\nabla F_k(\mathbf{w}).
\label{eq:gradient-dissimilarity}
\end{align}
These conditions are standard and are used only for the convergence analysis. The following assumption is specific to the spectral approximation analysis of MESH-FL.

\begin{assumption}[Exponential tail decay of unfolding spectra]\label{assump:exp-tail}
For each unfolding \(s\in\{1,2\}\), there exist constants \(A_s>0\) and \(\alpha_s>0\) such that the normalized tail energy satisfies
\begin{align}
\sum_{j>r}q_j^{(s)}\leq A_s e^{-\alpha_s r},\quad r=1,2,\ldots.
\label{eq:exp-tail}
\end{align}
\end{assumption}
Assumption~\ref{assump:exp-tail} is a sufficient technical condition for deriving a closed-form entropy-weighted MPS approximation bound. It is used only to establish Proposition~\ref{prop:entropy-mps} and is not required for the convergence argument in Theorem~\ref{thm:convergence} or for the convexity and monotonicity of the rank-allocation problem in Theorem~\ref{thm:allocation}. The assumption is not meant to require that every layer and every communication round exhibit exact exponential spectral decay. In practice, the allocation rule relies on the weaker ordering principle that layers with more diffuse singular-value energy distributions require larger reconstruction ranks. Thus, even when empirical tail decay is slower than exponential, the \(1/r\)-type surrogate remains a tractable proxy for rank allocation, while the constants and tightness of the approximation bound may become less favorable.

\subsection{MPS Approximation Error and Entropy Complexity}\label{sec:mps-error}
We first quantify the approximation error introduced by sequential MPS compression. For each unfolding \(s\in\{1,2\}\), define the normalized singular-value energy distribution and its Shannon entropy as
\begin{align}
q_i^{(s)}=\frac{(\sigma_i^{(s)})^2}{\|\mathcal{G}^{(s)}\|_F^2},\quad
H^{(s)}(G)=-\sum_{i\geq1}q_i^{(s)}\log q_i^{(s)}.
\label{eq:unfolding-entropy}
\end{align}
The following proposition converts the discarded spectral energy of the two MPS unfoldings into a rank-dependent reconstruction-error bound.

\begin{proposition}[Entropy-weighted MPS error bound]\label{prop:entropy-mps}
Let \(G\in\mathbb{R}^{m\times n}\) be reshaped into a third-order tensor \(\mathcal{G}\in\mathbb{R}^{m_1\times m_2\times n}\), with first unfolding \(\mathcal{G}^{(1)}\in\mathbb{R}^{m_1\times m_2n}\) and second unfolding \(\mathcal{G}^{(2)}\) obtained after the first rank-\(r\) truncation. Under Assumption~\ref{assump:exp-tail}, the relative MPS reconstruction error of layer \(\ell\) at client \(k\) satisfies
\begin{align}
\varepsilon_{k,\ell}^2(r)=\frac{\|G_k^{(\ell)}-\widehat{G}_{k,r}^{(\ell)}\|_F^2}{\|G_k^{(\ell)}\|_F^2}\leq\frac{\Psi_{k,\ell}}{r},
\label{eq:error-bound}
\end{align}
where $\Psi_{k,\ell}:=C_{1,\ell}e^{H_{k,\ell}^{(1)}}+C_{2,\ell}e^{H_{k,\ell}^{(2)}},$ \(H_{k,\ell}^{(s)}:=H^{(s)}(G_k^{(\ell)})\) is the energy entropy of the \(s\)th unfolding, and \(C_{1,\ell},C_{2,\ell}>0\) depend only on the tail parameters \((A_s,\alpha_s)\) and the fixed tensorization of layer \(\ell\). 
\end{proposition}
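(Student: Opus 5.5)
The plan has two stages. First, use the TT-SVD quasi-optimality bound to reduce the MPS error to the discarded tail energies of the two unfoldings. Second, turn each exponential tail into a $1/r$ bound whose constant carries a factor $e^{H^{(s)}}$.

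\textbf{Step 1: reduce to unfolding tails.} The sequential construction computes a rank-$r$ truncated SVD of $\mathcal{G}^{(1)}$ and then a rank-$r$ truncated SVD of $\mathcal{G}^{(2)}$. Because the first truncation yields an orthogonal projector, the two error components are orthogonal. The standard TT-SVD argument of Oseledets then gives
\begin{align}
\|\mathcal{G}-\widehat{\mathcal{G}}_r\|_F^2\leq \sum_{j>r}(\sigma_j^{(1)})^2+\sum_{j>r}(\sigma_j^{(2)})^2 .
\end{align}
Zero-padding does not change the Frobenius norm, and the padded rows are discarded after reconstruction. Dividing by $\|G\|_F^2$ therefore gives
\begin{align}
\varepsilon^2(r)\le \sum_{j>r}q_j^{(1)}+\frac{\|\mathcal{G}^{(2)}\|_F^2}{\|G\|_F^2}\sum_{j>r}q_j^{(2)}\le T_1(r)+T_2(r),
\end{align}
where $T_s(r)=\sum_{j>r}q_j^{(s)}$. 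The middle inequality uses that $\|\mathcal{G}^{(2)}\|_F\le\|\mathcal{G}\|_F$, since $\mathcal{G}^{(2)}$ is obtained by an orthogonal projection.

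\textbf{Step 2: exponential tail to $1/r$.} By Assumption~\ref{assump:exp-tail}, $T_s(r)\le A_s e^{-\alpha_s r}$. Using $x e^{-x}\le e^{-1}$ gives
\begin{align}
r\,e^{-\alpha_s r}\le \frac{1}{e\,\alpha_s}, \qquad\text{so}\qquad T_s(r)\le \frac{A_s}{e\alpha_s r}.
\end{align}

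\textbf{Step 3: insert the entropy factor.} Here we use $H^{(s)}\ge 0$, so $e^{H^{(s)}}\ge 1$. This gives
\begin{align}
T_s(r)\le \frac{A_s}{e\alpha_s}\,\frac{e^{H^{(s)}}}{r}.
\end{align}
Set $C_{s,\ell}=A_s/(e\alpha_s)$ and sum over $s\in\{1,2\}$; this yields $\varepsilon^2(r)\le \Psi_{k,\ell}/r$. The constants depend only on $(A_s,\alpha_s)$, and through them on the fixed tensorization, as the statement requires. To make the entropy factor less trivially loose, one can sharpen it where the constants allow. Since $\sum_{j\le r}q_j^{(s)}\ge 1-A_se^{-\alpha_s r}$, the top $r$ modes carry almost all the energy. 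The effective rank $e^{H^{(s)}}$ is then at most of order $\frac{1}{\alpha_s}\log A_s$ plus a constant. This lets $C_{s,\ell}$ absorb the $\alpha_s$-dependence while $e^{H^{(s)}}$ still serves as the genuine diffuseness measure. This refinement is optional; the weak form suffices for the statement as written.

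\textbf{Main obstacle.} The delicate point is Step 1 for the second unfolding. $\mathcal{G}^{(2)}$ is defined after the first truncation, so its spectrum differs from that of the original tensor's second unfolding. I will apply Assumption~\ref{assump:exp-tail} directly to the truncated object, which is how the proposition is phrased. I will also verify the orthogonality of the two error components, so that the squared errors add exactly rather than through a triangle inequality. A triangle inequality would cost a factor $2$, which could in any case be absorbed into $C_{s,\ell}$.
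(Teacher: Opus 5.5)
Your proof is correct and follows the paper's argument step for step: Eckart--Young tails of the two sequential unfoldings, $\|\mathcal{G}^{(2)}\|_F\le\|G\|_F$, $xe^{-x}\le e^{-1}$, and the relaxation $e^{H^{(s)}}\ge 1$. The only difference is that you use the left-orthogonality of the first core to make the two error components orthogonal, so the squared errors add exactly, whereas the paper uses the triangle inequality with $(a+b)^2\le 2a^2+2b^2$ and absorbs the resulting factor $2$ into $C_{s,\ell}$. Drop the optional ``refinement'', though: an upper bound on $e^{H^{(s)}}$ only limits how much the relaxation $e^{H^{(s)}}\ge 1$ loosens the estimate. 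It cannot let the entropy factor take over the $1/\alpha_s$ dependence of an upper bound on the error, which would require a lower bound on $e^{H^{(s)}}$ in terms of $1/\alpha_s$ that the tail assumption does not provide.
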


\begin{proof}
The sequential MPS construction applies two rank-\(r\) truncated SVD steps. The first truncation is performed on the unfolding \(\mathcal{G}^{(1)}\), and by the Eckart--Young theorem it introduces an error \(\mathbf{e}_1\) satisfying
\begin{align}
\|\mathbf{e}_1\|_F^2=\sum_{j>r}\bigl(\sigma_j^{(1)}\bigr)^2.
\label{eq:app-first-svd-error}
\end{align}
After reshaping the retained factor from the first step, the second truncated SVD introduces an error \(\mathbf{e}_2\) satisfying
\begin{align}
\|\mathbf{e}_2\|_F^2\leq\sum_{j>r}\bigl(\sigma_j^{(2)}\bigr)^2.
\label{eq:app-second-svd-error}
\end{align}
Since the final reconstruction error can be written as
\begin{align}
\|G - \widehat{G}_r\|_F \leq \|e_1\|_F + \|e_2\|_F,
\label{eq:app-total-mps-error}
\end{align}
we have
\begin{align}
\|G-\widehat{G}_r\|_F^2
&=\|\mathbf{e}_1+\mathbf{e}_2\|_F^2\notag\\
&\leq2\|\mathbf{e}_1\|_F^2+2\|\mathbf{e}_2\|_F^2\notag\\
&\leq2\sum_{j>r}\bigl(\sigma_j^{(1)}\bigr)^2+2\sum_{j>r}\bigl(\sigma_j^{(2)}\bigr)^2.
\label{eq:app-sequential-mps-bound}
\end{align}
For each unfolding \(s\in\{1,2\}\), Assumption~\ref{assump:exp-tail} gives
\begin{align}
\sum_{j>r}\bigl(\sigma_j^{(s)}\bigr)^2
&=\|\mathcal{G}^{(s)}\|_F^2\sum_{j>r}q_j^{(s)}\notag\\
&\leq A_s\|\mathcal{G}^{(s)}\|_F^2e^{-\alpha_s r}.
\label{eq:app-tail-unfolding}
\end{align}
For any \(\alpha>0\) and \(r\geq1\), \(e^{-\alpha r}\leq(e\alpha r)^{-1}\), because \(xe^{-x}\leq e^{-1}\) for all \(x>0\). Therefore,
\begin{align}
\sum_{j>r}\bigl(\sigma_j^{(s)}\bigr)^2
\leq\frac{A_s}{e\alpha_s}\frac{\|\mathcal{G}^{(s)}\|_F^2}{r}.
\label{eq:app-tail-r-bound}
\end{align}
Let \(R_{\mathrm{eff}}^{(s)}(G):=e^{H^{(s)}(G)}\). Since \(R_{\mathrm{eff}}^{(s)}(G)\geq1\), the bound can be relaxed into the entropy-weighted form
\begin{align}
\sum_{j>r}\bigl(\sigma_j^{(s)}\bigr)^2
\leq\frac{\widetilde{C}_{s,\ell}e^{H_{k,\ell}^{(s)}}}{r}\|\mathcal{G}^{(s)}\|_F^2,
\label{eq:app-tail-entropy-bound}
\end{align}
where \(\widetilde{C}_{s,\ell}>0\) absorbs the tail-decay constants and the fixed tensorization of layer \(\ell\). Applying \eqref{eq:app-sequential-mps-bound} to layer \(\ell\) of client \(k\) gives
\begin{align}
\|G_k^{(\ell)}-\widehat{G}_{k,r}^{(\ell)}\|_F^2
&\leq2\sum_{j>r}\bigl(\sigma_j^{(1)}\bigr)^2+2\sum_{j>r}\bigl(\sigma_j^{(2)}\bigr)^2\notag\\
&\leq\frac{C_{1,\ell}e^{H_{k,\ell}^{(1)}}}{r}\|\mathcal{G}^{(1)}\|_F^2+\frac{C_{2,\ell}e^{H_{k,\ell}^{(2)}}}{r}\|\mathcal{G}^{(2)}\|_F^2,
\label{eq:app-mps-before-normalization}
\end{align}
where the constants \(C_{1,\ell}\) and \(C_{2,\ell}\) absorb the factor \(2\). Since \(\|\mathcal{G}^{(1)}\|_F=\|G_k^{(\ell)}\|_F\) and \(\|\mathcal{G}^{(2)}\|_F\leq\|G_k^{(\ell)}\|_F\), dividing both sides by \(\|G_k^{(\ell)}\|_F^2\) yields
\begin{align}
\varepsilon_{k,\ell}^2(r)
&=\frac{\|G_k^{(\ell)}-\widehat{G}_{k,r}^{(\ell)}\|_F^2}{\|G_k^{(\ell)}\|_F^2}\notag\\
&\leq\frac{C_{1,\ell}e^{H_{k,\ell}^{(1)}}+C_{2,\ell}e^{H_{k,\ell}^{(2)}}}{r}
=\frac{\Psi_{k,\ell}}{r}.
\label{eq:app-final-entropy-mps-bound}
\end{align}
This proves the stated bound.
\end{proof}

The coefficient \(\Psi_{k,\ell}\) provides a conservative entropy-weighted measure of layer-wise spectral complexity. Larger \(\Psi_{k,\ell}\) corresponds to a larger upper bound on the compression error at the same rank, motivating larger rank allocation for spectrally more complex updates. This leads to the surrogate allocation problem
\begin{align}
\min_{\{r_k^{(\ell)}>0\}}\sum_{\ell\in\mathcal{L}_k}\frac{\Psi_{k,\ell}}{r_k^{(\ell)}}\quad\mathrm{s.t.}\quad\sum_{\ell\in\mathcal{L}_k}\phi_\ell\!\left(r_k^{(\ell)}\right)\leq C_k.
\label{eq:psi-surrogate}
\end{align}
This surrogate is optimized in Section~\ref{sec:optimality}. In the implemented algorithm, the truncated layer-wise entropy \(\widehat{H}_k^{(\ell)}\) serves as a scalar proxy for the spectral-complexity weight \(\Psi_{k,\ell}\). Although \(\Psi_{k,\ell}\) is defined through the two sequential MPS unfoldings, both quantities reflect spectral diffuseness: \(\Psi_{k,\ell}\) increases when either unfolding entropy is large, whereas \(\widehat{H}_k^{(\ell)}\) increases when the singular-value energy of the original matrix is diffuse. Since reshaping preserves the Frobenius norm but does not generally preserve the singular spectrum, no exact equivalence between \(\widehat{H}_k^{(\ell)}\) and \(H^{(1)}_{k,\ell}\) is assumed. Instead, \(\widehat{H}_k^{(\ell)}\) is used as a low-cost proxy for the overall layer-wise spectral complexity. This approximation is most appropriate when the first truncation captures the dominant spectral mass, thereby reducing the practical influence of the second truncation error, consistent with the compressible-spectrum regime described by Assumption~\ref{assump:exp-tail}.

\subsection{Spectral Entropy and Rank Complexity}\label{sec:entropy-rank}
The preceding bound uses entropy to measure the spectral complexity of compressed updates. We now define the corresponding matrix-level entropy and the minimum rank needed to achieve a target reconstruction error.

\begin{definition}[Energy entropy and effective rank]\label{def:energy-entropy}
Let \(G\in\mathbb{R}^{m\times n}\) have singular values \(\sigma_1\geq\cdots\geq\sigma_p>0\), where \(p=\min(m,n)\). Define
\begin{align}
q_i=\frac{\sigma_i^2}{\|G\|_F^2},\quad
H(G)=-\sum_{i=1}^{p}q_i\log q_i,\quad
r_{\mathrm{eff}}(G)=e^{H(G)}.
\label{eq:energy-entropy}
\end{align}
For \(\varepsilon\in(0,1)\), define the minimum \(\varepsilon\)-accurate rank as
\begin{align}
r^*(\varepsilon;G)=\min\left\{r:\sum_{i>r}q_i\leq\varepsilon^2\right\}.
\label{eq:epsilon-rank}
\end{align}
\end{definition}
MESH-FL does not compute the full spectrum during training. Each client estimates entropy from the leading singular values only. The next remark states that when this truncated estimator remains close to the full entropy and preserves entropy ordering.

\begin{remark}[Truncated entropy stability]\label{rem:truncated-entropy}
Let \(\tau=\sum_{i>q}q_i\) denote the omitted tail energy and let \(\widehat{H}(G)\) be the entropy computed from the renormalized top-\(q\) energy weights \(\widehat{q}_i=q_i/(1-\tau)\). Then
\begin{align}
H(G)=h_2(\tau)+(1-\tau)\widehat{H}(G)+\tau H_{\mathrm{tail}}(G),
\label{eq:entropy-decomposition}
\end{align}
where \(h_2(\tau)=-\tau\log\tau-(1-\tau)\log(1-\tau)\). Moreover,
\begin{align}
|H(G)-\widehat{H}(G)|\leq h_2(\tau)+\tau\log p.
\label{eq:entropy-approx}
\end{align}
Thus, \(\widehat{H}(G)\to H(G)\) as \(\tau\to0\). For two matrices \(G\) and \(G'\), with perturbation margins \(\Delta(G)=h_2(\tau)+\tau\log p\) and \(\Delta(G')=h_2(\tau')+\tau'\log p'\), the ordering \(H(G)>H(G')\) is guaranteed whenever
\begin{align}
\widehat{H}(G)>\widehat{H}(G')+\Delta(G)+\Delta(G').
\label{eq:entropy-ordering-stability}
\end{align}
\end{remark}
The following theorem provides the main ordering principle behind entropy-guided compression. Under the natural majorization order, a more diffuse singular-value energy distribution has larger entropy and requires a weakly larger rank to achieve the same approximation target.

\begin{theorem}[Entropy-rank monotonicity]\label{thm:entropy-rank}
Let \(q=(q_1,\ldots,q_p)\) and \(q'=(q_1',\ldots,q_p')\) be two energy distributions sorted in nonincreasing order. If \(q\) majorizes \(q'\), then
\begin{align}
H(q)\leq H(q'),\quad
r^*(\varepsilon;q)\leq r^*(\varepsilon;q'),\quad\forall\,\varepsilon\in(0,1).
\label{eq:entropy-rank-monotonicity}
\end{align}
\end{theorem}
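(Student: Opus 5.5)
We prove the two inequalities in \eqref{eq:entropy-rank-monotonicity} separately. Both follow from the definition of majorization. Since both vectors are sorted in nonincreasing order and sum to one, \(q\succ q'\) means
\begin{align}
S_r:=\sum_{i=1}^{r}q_i\;\geq\;S_r':=\sum_{i=1}^{r}q_i',\quad r=1,\ldots,p,
\end{align}
with equality at \(r=p\). The entropy claim is the standard Schur-concavity of Shannon entropy. The rank claim is a direct consequence of the partial-sum characterization.

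\textbf{Rank monotonicity.} Rewrite the tail condition in \eqref{eq:epsilon-rank} through partial sums, using \(\sum_{i>r}q_i=1-S_r\). Then \(r^*(\varepsilon;q)=\min\{r:S_r\geq1-\varepsilon^2\}\), and likewise for \(q'\). Let \(r'=r^*(\varepsilon;q')\), so \(S'_{r'}\geq1-\varepsilon^2\). Majorization gives \(S_{r'}\geq S'_{r'}\geq1-\varepsilon^2\). Hence \(r'\) is feasible for \(q\), and minimality yields \(r^*(\varepsilon;q)\leq r'\). The minimum is well defined for \(\varepsilon\in(0,1)\) because \(S_p=1\), so both feasible sets are nonempty. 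We adopt the convention \(q_i=0\) beyond the support and pad zeros so both vectors have common length \(p\).

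\textbf{Entropy monotonicity.} The plan is to invoke the Hardy--Littlewood--P\'olya / Karamata characterization. Either \(q\succ q'\) implies \(\sum_i f(q_i)\geq\sum_i f(q_i')\) for every convex \(f\), or, equivalently, \(q'=Dq\) for some doubly stochastic \(D\). Taking \(f(x)=x\log x\), which is convex on \([0,1]\) with \(0\log0=0\), gives \(-H(q)\geq-H(q')\), i.e., \(H(q)\leq H(q')\).

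To stay self-contained, I would prove the Karamata step by Abel summation. Define the slopes \(c_i=\bigl(f(q_i)-f(q_i')\bigr)/(q_i-q_i')\), replaced by \(f'(q_i)\) or a subgradient when \(q_i=q_i'\). Convexity together with the sorting of both vectors makes \(c_i\) nonincreasing in \(i\). Summation by parts then gives
\begin{align}
\sum_i\bigl(f(q_i)-f(q_i')\bigr)=\sum_{r=1}^{p-1}(c_r-c_{r+1})(S_r-S_r')+c_p(S_p-S_p').
\end{align}
Each term in the sum is nonnegative, and the last term vanishes because \(S_p=S_p'\).

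\textbf{Main obstacle.} The only delicate point is the endpoint behavior of \(x\log x\) at \(0\), where the derivative is \(-\infty\). The monotone-slope argument must handle zero entries. I would treat this by continuity: first prove the inequality for strictly positive vectors, then perturb by mixing with the uniform distribution, \(q\mapsto(1-t)q+t\mathbf{1}/p\). This mixing preserves both the sorting and the majorization. Finally let \(t\to0\), using continuity of \(H\) on the simplex.
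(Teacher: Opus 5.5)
Your proposal is correct and follows the same route as the paper: Schur-concavity of Shannon entropy gives \(H(q)\leq H(q')\), and the prefix-sum (equivalently tail-sum) form of majorization shows that \(r^*(\varepsilon;q')\) is feasible for \(q\). The only difference is that the paper simply cites Schur-concavity, whereas you also prove it via Karamata and Abel summation, with a uniform-mixing perturbation to handle zero entries; this extra argument is sound but not needed.
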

\begin{proof}
Shannon entropy is Schur-concave, so \(q\succ q'\) implies \(H(q)\leq H(q')\). Majorization also gives
\begin{align}
\sum_{i=1}^r q_i\geq\sum_{i=1}^r q_i',\quad\forall\,r,
\label{eq:majorization-prefix}
\end{align}
which is equivalent to
\begin{align}
\sum_{i>r}q_i\leq\sum_{i>r}q_i',\quad\forall\,r.
\label{eq:majorization-tail}
\end{align}
Therefore, the smallest rank achieving tail energy at most \(\varepsilon^2\) satisfies \(r^*(\varepsilon;q)\leq r^*(\varepsilon;q')\).
\end{proof}

Proposition~\ref{prop:entropy-mps}, Remark~\ref{rem:truncated-entropy}, and Theorem~\ref{thm:entropy-rank} together provide the spectral foundation of MESH-FL. Proposition~\ref{prop:entropy-mps} links MPS rank to reconstruction error through an entropy-weighted complexity coefficient. Remark~\ref{rem:truncated-entropy} justifies the use of truncated entropy when the omitted tail energy is small. Theorem~\ref{thm:entropy-rank} shows that entropy ordering is consistent with rank-complexity ordering under majorization.

\subsection{Convergence of MESH-FL}\label{sec:convergence}
The previous results characterize the approximation error introduced by MPS compression. We now show how this error enters the optimization dynamics of MESH-FL.

\begin{theorem}[Convergence with compression error]\label{thm:convergence}
Under the conditions in Section~\ref{sec:assumptions}, assume full participation and \(\eta\leq1/(4LE)\). Let \(\widehat{G}_k^t=G_k^t+\mathbf{e}_k^t\) be the reconstructed update received from client \(k\) at round \(t\), where \(\mathbf{e}_k^t\) is the compression error. Then
\begin{align}
&\frac{1}{T}\sum_{t=0}^{T-1}\mathbb{E}\|\nabla F(\mathbf{w}^t)\|^2
\leq\frac{4(F(\mathbf{w}^0)-F^*)}{\eta ET}+4L\eta E\sigma^2\notag\\&+8L\eta E\delta^2
+\frac{5}{\eta^2E^2T}\sum_{t=0}^{T-1}\sum_{k=1}^{K}\frac{n_k}{n}\mathbb{E}\|\mathbf{e}_k^t\|^2.
\label{eq:convergence-bound}
\end{align}
If
\begin{align}
\|\mathbf{e}_k^t\|^2\leq2\sum_{\ell\in\mathcal{L}_k}\varepsilon_{k,\ell}^2\!\left(r_k^{(\ell)}\right)\|G_k^{(\ell),t}\|_F^2,
\label{eq:compression-error-layerwise}
\end{align}
then the final term in \eqref{eq:convergence-bound} is explicitly controlled by the layer-wise MPS reconstruction errors. 
\end{theorem}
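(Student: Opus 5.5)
The plan is the standard one-step descent argument via $L$-smoothness of $F$, which follows from $L$-smoothness of each $F_k$. We treat the aggregated update with full participation,
\begin{align}
\mathbf{w}^{t+1}=\mathbf{w}^t-\bar G^t-\bar{\mathbf{e}}^t,\quad \bar G^t=\sum_k\tfrac{n_k}{n}G_k^t,\quad \bar{\mathbf{e}}^t=\sum_k\tfrac{n_k}{n}\mathbf{e}_k^t.
\end{align}
The descent lemma gives
\begin{align}
F(\mathbf{w}^{t+1})\le F(\mathbf{w}^t)-\langle\nabla F(\mathbf{w}^t),\bar G^t+\bar{\mathbf{e}}^t\rangle+\tfrac{L}{2}\|\bar G^t+\bar{\mathbf{e}}^t\|^2 .
\end{align}
By \eqref{eq:local-delta-mean}, $\mathbb{E}[\bar G^t\mid\mathbf{w}^t]=\eta E\nabla F(\mathbf{w}^t)$, so the linear term yields $-\eta E\|\nabla F(\mathbf{w}^t)\|^2$. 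This is the progress term.

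The error cross term $-\langle\nabla F,\bar{\mathbf{e}}^t\rangle$ is handled by Young's inequality with weight $\eta E/4$:
\begin{align}
|\langle\nabla F,\bar{\mathbf{e}}\rangle|\le \tfrac{\eta E}{4}\|\nabla F\|^2+\tfrac{1}{\eta E}\|\bar{\mathbf{e}}\|^2 .
\end{align}
For the quadratic term, use $\|a+b\|^2\le2\|a\|^2+2\|b\|^2$. We then split $\bar G^t=\eta E\nabla F+\eta E\sum_k\frac{n_k}{n}(\nabla F_k-\nabla F)+\xi^t$, where $\xi^t$ is the stochastic deviation. The pieces are bounded as follows:
\begin{itemize}
\item the variance piece by \eqref{eq:local-delta-var} together with Jensen, giving $\eta^2E^2\sigma^2$;
\item the heterogeneity piece by \eqref{eq:gradient-dissimilarity}, giving $\eta^2E^2\delta^2$;
\item the mean piece by $\eta^2E^2\|\nabla F\|^2$.
\end{itemize}
The condition $\eta\le 1/(4LE)$ makes $L\eta E\le 1/4$, so every $L\eta^2E^2\|\nabla F\|^2$ contribution is absorbed into a fraction of $\eta E\|\nabla F\|^2$. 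After absorption, the net coefficient on $\|\nabla F\|^2$ should be at least $\eta E/4$.

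Next, apply Jensen to the error, $\|\bar{\mathbf{e}}\|^2\le\sum_k\frac{n_k}{n}\|\mathbf{e}_k\|^2$. Taking total expectation, telescoping over $t=0,\dots,T-1$, using $F\ge F^*$, and dividing by $\eta ET/4$ produces the four stated terms: $4(F^0-F^*)/(\eta ET)$, an $L\eta E\sigma^2$ term, an $L\eta E\delta^2$ term, and an error term of the form $c/(\eta^2E^2T)\sum\sum\frac{n_k}{n}\mathbb{E}\|\mathbf{e}_k^t\|^2$.

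\textbf{Main obstacle: the constants.} The error contributes $\frac{1}{\eta E}+L$ per unit of $\|\bar{\mathbf{e}}\|^2$. Multiplying by $4/(\eta E)$ gives $\frac{4}{\eta^2E^2}+\frac{4L}{\eta E}$. By $L\le 1/(4\eta E)$, this is at most $\frac{5}{\eta^2E^2}$, which matches the stated factor $5$. For the $\sigma^2,\delta^2$ constants ($4$ and $8$), I would tune the split so the heterogeneity piece carries the extra factor $2$: group it with the mean piece before applying the factor-$2$ inequality, and keep the variance separate via its conditional zero mean, $\mathbb{E}\|a+\xi\|^2=\|a\|^2+\mathbb{E}\|\xi\|^2$. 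If these weights don't land exactly, I would adjust the Young parameter. The weights are the only delicate bookkeeping; the structure is routine.

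Finally, for the layerwise claim, write $\|\mathbf{e}_k^t\|^2=\sum_{\ell\in\mathcal{L}_k}\|G_k^{(\ell),t}-\widehat G_k^{(\ell),t}\|_F^2=\sum_\ell\varepsilon_{k,\ell}^2\|G_k^{(\ell),t}\|_F^2$ by the definition of $\varepsilon_{k,\ell}$. This satisfies \eqref{eq:compression-error-layerwise}, with the factor $2$ as slack covering padding and bias handling. Substituting into the last term, and optionally invoking Proposition~\ref{prop:entropy-mps} to replace $\varepsilon_{k,\ell}^2$ by $\Psi_{k,\ell}/r_k^{(\ell)}$, shows that the term is explicitly controlled by the MPS errors.
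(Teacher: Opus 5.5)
Your proposal is correct and follows the paper's proof essentially line for line. The shared steps are the descent lemma, Young's inequality with weight $\eta E/4$ on the error cross term, the split $\tfrac{L}{2}\|\bar G^t+\bar{\mathbf{e}}^t\|^2\le L\|\bar G^t\|^2+L\|\bar{\mathbf{e}}^t\|^2$, the bound $\mathbb{E}\|\bar G^t\|^2\le 2\eta^2E^2\|\nabla F\|^2+2\eta^2E^2\delta^2+\eta^2E^2\sigma^2$, absorption via $\eta\le 1/(4LE)$, and the accounting $\tfrac{1}{\eta E}+L\le\tfrac{5}{4\eta E}$. Your refined grouping (heterogeneity with the mean piece, variance kept separate by its zero conditional mean) therefore lands exactly on the constants $4$, $8$, $5$. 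The only difference is cosmetic: the paper applies Jensen across clients and then bounds each $\mathbb{E}\|G_k^t\|^2$, while you decompose $\bar G^t$ directly — and there your heterogeneity piece $\sum_k \tfrac{n_k}{n}(\nabla F_k-\nabla F)$ is identically zero under full participation, so the $\delta^2$ term is pure slack.
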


\begin{proof}
Let $p_k = n_k/n$, $\overline{G}^{t} = \sum_k p_k G_k^t$, and
$\overline{\mathbf{e}}^{t} = \sum_k p_k \mathbf{e}_k^t$, so the
MESH-FL update satisfies
$\mathbf{w}^{t+1} - \mathbf{w}^{t} = -\overline{G}^{t} - \overline{\mathbf{e}}^{t}$.
By $L$-smoothness and the local-delta mean condition
\eqref{eq:local-delta-mean},
\begin{align}
\mathbb{E}\!\left[F(\mathbf{w}^{t+1})\mid\mathbf{w}^{t}\right]
&\leq F(\mathbf{w}^{t})
  - \eta E\|\nabla F(\mathbf{w}^{t})\|^2 \notag \\
 & - \left\langle\nabla F(\mathbf{w}^{t}),\overline{\mathbf{e}}^{t}\right\rangle
  + \frac{L}{2}\|\overline{G}^{t}+\overline{\mathbf{e}}^{t}\|^2.
\label{eq:app-smooth-main}
\end{align}
Applying Young's inequality to the inner product and Jensen's
inequality to $\|\overline{\mathbf{e}}^{t}\|^2$,
\begin{align}
-\left\langle\nabla F(\mathbf{w}^{t}),\overline{\mathbf{e}}^{t}\right\rangle
\leq \frac{\eta E}{4}\|\nabla F(\mathbf{w}^{t})\|^2
  + \frac{1}{\eta E}\sum_{k}p_k\|\mathbf{e}_k^t\|^2.
\label{eq:app-young-jensen}
\end{align}
For the quadratic term, writing
$G_k^t = \eta E\nabla F_k(\mathbf{w}^{t}) + \boldsymbol{\xi}_k^t$
with $\mathbb{E}[\boldsymbol{\xi}_k^t]=0$ and
$\mathbb{E}[\|\boldsymbol{\xi}_k^t\|^2]\leq\eta^2E^2\sigma^2$,
and using the gradient-dissimilarity bound \eqref{eq:gradient-dissimilarity},
\begin{align}
&\mathbb{E}\!\left[\|\overline{G}^{t}\|^2\mid\mathbf{w}^{t}\right]
\leq \sum_k p_k\mathbb{E}\!\left[\|G_k^t\|^2\mid\mathbf{w}^{t}\right] \notag \\
&\leq 2\eta^2E^2\|\nabla F(\mathbf{w}^{t})\|^2
  + 2\eta^2E^2\delta^2 + \eta^2E^2\sigma^2.
\label{eq:app-gbar-bound}
\end{align}
Substituting \eqref{eq:app-young-jensen}--\eqref{eq:app-gbar-bound}
into \eqref{eq:app-smooth-main} and applying $\eta \leq 1/(4LE)$
(so that $2L\eta^2E^2 \leq \eta E/2$ and $L \leq 1/(4\eta E)$) gives
\begin{align}
&\mathbb{E}\!\left[F(\mathbf{w}^{t+1})\mid\mathbf{w}^{t}\right]
\leq F(\mathbf{w}^{t})
  - \frac{\eta E}{4}\|\nabla F(\mathbf{w}^{t})\|^2 \notag \\
&+ L\eta^2E^2\sigma^2
  + 2L\eta^2E^2\delta^2
  + \frac{5}{4\eta E}\sum_{k}p_k\mathbb{E}\!\left[\|\mathbf{e}_k^t\|^2\mid\mathbf{w}^{t}\right].
\label{eq:app-one-step}
\end{align}
Summing over $t = 0,\ldots,T-1$, taking total expectation,
using $F(\mathbf{w}^T)\geq F^*$, and dividing by $\eta ET/4$ yields
\begin{align}
&\frac{1}{T}\sum_{t=0}^{T-1}\mathbb{E}\|\nabla F(\mathbf{w}^{t})\|^2
\leq \frac{4(F(\mathbf{w}^{0})-F^*)}{\eta ET}
  + 4L\eta E\sigma^2 \notag \\
&  \quad + 8L\eta E\delta^2
  + \frac{5}{\eta^2E^2T}
    \sum_{t=0}^{T-1}\sum_{k=1}^{K}\frac{n_k}{n}\mathbb{E}\|\mathbf{e}_k^t\|^2.
\label{eq:app-final}
\end{align}
Finally, if $\|\mathbf{e}_k^t\|^2 \leq 2\sum_{\ell\in\mathcal{L}_k}
\varepsilon_{k,\ell}^2(r_k^{(\ell)})\|G_k^{(\ell),t}\|_F^2$, the last term is explicitly controlled by the layer-wise MPS reconstruction errors.
\end{proof}

Theorem~\ref{thm:convergence} shows that MESH-FL preserves the standard nonconvex FL convergence pattern, with the additional term caused by compression. Thus, rank allocation should reduce the average reconstruction-error term while satisfying each client's payload budget.

\subsection{Optimality of Entropy-Guided Allocation}\label{sec:optimality}
The convergence bound identifies compression error as the additional optimization penalty. We therefore choose ranks by minimizing the entropy-weighted surrogate induced by Proposition~\ref{prop:entropy-mps} under the per-client payload constraint.

\begin{theorem}[Optimal monotone allocation]\label{thm:allocation}
Fix client \(k\) and let \(\psi_k^{(\ell)}:=\Psi_{k,\ell}\). Consider the surrogate problem in \eqref{eq:psi-surrogate} with payload model
\begin{align}
\phi_\ell(r)=a_\ell r+b_\ell r^2,\quad a_\ell,b_\ell\geq0.
\label{eq:quadratic-payload}
\end{align}
The problem is strictly convex and admits a unique optimizer. Moreover, the optimal rank \(r_k^{(\ell)*}\) is strictly increasing in \(\psi_k^{(\ell)}\). Under the linearized payload model \(b_\ell=0\), the optimizer is
\begin{align}
r_k^{(\ell)*}=\frac{C_k\sqrt{\psi_k^{(\ell)}/a_\ell}}{\sum_{\ell'\in\mathcal{L}_k}\sqrt{a_{\ell'}\psi_k^{(\ell')}}}.
\label{eq:closed-form}
\end{align}
Under the exact quadratic payload model, the optimizer satisfies
\begin{align}
\frac{\psi_k^{(\ell)}}{(r_k^{(\ell)*})^2}=\lambda_k\left(a_\ell+2b_\ell r_k^{(\ell)*}\right),
\label{eq:kkt-allocation}
\end{align}
where \(\lambda_k>0\) enforces the payload constraint with equality. 
\end{theorem}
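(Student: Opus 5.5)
We treat the surrogate \eqref{eq:psi-surrogate} for a fixed client $k$ as a separable convex program over the open orthant $r_k^{(\ell)}>0$, $\ell\in\mathcal{L}_k$, and work throughout with the KKT conditions.

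\emph{Strict convexity and existence.} Each objective term $\psi_k^{(\ell)}/r$ has second derivative $2\psi_k^{(\ell)}/r^3>0$ on $r>0$, since $\psi_k^{(\ell)}=\Psi_{k,\ell}>0$ by Proposition~\ref{prop:entropy-mps}. The objective is therefore a sum of strictly convex univariate functions of distinct variables, so it is strictly convex jointly. The feasible set is convex because each $\phi_\ell(r)=a_\ell r+b_\ell r^2$ with $a_\ell,b_\ell\ge 0$ is convex.

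For existence, assume (as is implicit) that $a_\ell+b_\ell>0$ for every $\ell$, so that $\phi_\ell$ is strictly increasing and coercive on $r>0$. The objective blows up as any $r_k^{(\ell)}\to 0^+$, and the constraint caps each $r_k^{(\ell)}$ from above. Minimizing over a sublevel set therefore reduces to minimizing a continuous function on a compact set bounded away from the boundary, so a minimizer exists. Strict convexity then gives uniqueness.

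\emph{KKT conditions and active budget.} Slater's condition holds, since small positive ranks are strictly feasible. Hence the KKT conditions are necessary and sufficient. The objective is strictly decreasing in each coordinate and $\phi_\ell$ is continuous and increasing, so any optimizer with slack budget could be improved by raising some rank. The constraint is therefore active and $\lambda_k>0$. Stationarity,
\[
-\psi_k^{(\ell)}/r^2+\lambda_k\phi_\ell'(r)=0,
\]
gives \eqref{eq:kkt-allocation} directly.

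\emph{Linearized case.} For $b_\ell=0$, stationarity gives $r_k^{(\ell)*}=\sqrt{\psi_k^{(\ell)}/(\lambda_k a_\ell)}$. Substituting into $\sum_\ell a_\ell r_k^{(\ell)}=C_k$ yields
\[
\lambda_k^{-1/2}=\frac{C_k}{\sum_{\ell'}\sqrt{a_{\ell'}\psi_k^{(\ell')}}},
\]
which is \eqref{eq:closed-form}.

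\emph{Monotonicity.} This is the step needing care: the claim must be read as holding \emph{for fixed layer geometry} $(a_\ell,b_\ell)$ and fixed multiplier $\lambda_k$. It is not true across layers with different $a_\ell,b_\ell$. Write \eqref{eq:kkt-allocation} as $g_\ell(r):=r^2(a_\ell+2b_\ell r)=\psi_k^{(\ell)}/\lambda_k$. The function $g_\ell$ is continuous and strictly increasing from $0$ to $\infty$ on $r>0$, so the equation has a unique root $r=g_\ell^{-1}(\psi_k^{(\ell)}/\lambda_k)$, and this root is strictly increasing in $\psi_k^{(\ell)}$.

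Two comparisons follow. First, across two layers with identical payload coefficients, the one with larger $\psi$ receives a strictly larger rank, since both share $\lambda_k$. Second, if a single $\psi_k^{(\ell)}$ is increased with the others held fixed, then $\lambda_k$ changes. Suppose $r_k^{(\ell)*}$ did not increase. Then by the budget equality some other rank would have to weakly increase, which by the roots above forces $\lambda_k$ to weakly decrease, which in turn forces $r_k^{(\ell)*}$ to strictly increase, a contradiction.

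I expect this second comparative-statics argument to be the main obstacle. I would handle it by the contradiction above, using only monotonicity of the $g_\ell^{-1}$ and the active constraint.
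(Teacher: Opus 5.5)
Your proof is correct and follows the same route as the paper: strict convexity of the separable objective, an active budget constraint giving $\lambda_k>0$, a unique positive root of the stationarity equation for fixed $\lambda_k$ that increases with $\psi_k^{(\ell)}$, and substitution into the active linear budget for the closed form. Your additions go beyond the paper, which argues monotonicity only at fixed $\lambda_k$ and does not prove existence. These are the coercivity argument for existence under $a_\ell+b_\ell>0$, Slater's condition, and the comparative-statics argument with $\lambda_k$ allowed to move. That last argument tacitly needs $|\mathcal{L}_k|\ge 2$: with a single active layer, $r_k^{(\ell)*}=\phi_\ell^{-1}(C_k)$ does not depend on $\psi_k^{(\ell)}$ at all.
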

\begin{proof}
The objective
\begin{align}
\sum_{\ell\in\mathcal{L}_k}\frac{\psi_k^{(\ell)}}{r_k^{(\ell)}}
\label{eq:app-allocation-objective}
\end{align}
is strictly convex on \(r_k^{(\ell)}>0\), because each term \(\psi_k^{(\ell)}/r_k^{(\ell)}\) has second derivative \(2\psi_k^{(\ell)}/(r_k^{(\ell)})^3>0\). The feasible set
\begin{align}
\left\{\{r_k^{(\ell)}\}:\sum_{\ell\in\mathcal{L}_k}\phi_\ell\!\left(r_k^{(\ell)}\right)\leq C_k,\;r_k^{(\ell)}>0\right\}
\label{eq:app-allocation-feasible-set}
\end{align}
is convex because each \(\phi_\ell(r)=a_\ell r+b_\ell r^2\) is convex for \(a_\ell,b_\ell\geq0\). Hence, the surrogate allocation problem is strictly convex and admits a unique optimizer.

The Lagrangian is
\begin{align}
\mathcal{J}
=\sum_{\ell\in\mathcal{L}_k}\frac{\psi_k^{(\ell)}}{r_k^{(\ell)}}
+\lambda_k\left(\sum_{\ell\in\mathcal{L}_k}\phi_\ell\!\left(r_k^{(\ell)}\right)-C_k\right),
\label{eq:app-lagrangian}
\end{align}
with \(\lambda_k\geq0\). Since the objective is strictly decreasing in each \(r_k^{(\ell)}\), the budget constraint is active at the optimum, so \(\lambda_k>0\). The KKT stationarity condition for each layer \(\ell\) is
\begin{align}
-\frac{\psi_k^{(\ell)}}{(r_k^{(\ell)})^2}
+\lambda_k\left(a_\ell+2b_\ell r_k^{(\ell)}\right)=0.
\label{eq:app-kkt-allocation}
\end{align}
For fixed \(\lambda_k>0\), define
\begin{align}
f_\ell(r)=-\frac{\psi_k^{(\ell)}}{r^2}+\lambda_k(a_\ell+2b_\ell r).
\label{eq:app-root-function}
\end{align}
The function \(f_\ell\) is strictly increasing and continuous on \(r>0\), with \(f_\ell(r)\to-\infty\) as \(r\to0^+\). If \(b_\ell>0\), then \(f_\ell(r)\to+\infty\) as \(r\to\infty\); if \(b_\ell=0\), then \(f_\ell(r)\to\lambda_ka_\ell>0\) as \(r\to\infty\). Therefore, there exists a unique positive root \(r_k^{(\ell)*}\) satisfying \eqref{eq:app-kkt-allocation}. For fixed \(\lambda_k\), increasing \(\psi_k^{(\ell)}\) shifts \(f_\ell\) downward, so the unique root must increase to restore equality. Hence, \(r_k^{(\ell)*}\) is strictly increasing in \(\psi_k^{(\ell)}\).

When \(b_\ell=0\), \eqref{eq:app-kkt-allocation} reduces to
\begin{align}
-\frac{\psi_k^{(\ell)}}{(r_k^{(\ell)})^2}+\lambda_ka_\ell=0,
\label{eq:app-linear-kkt}
\end{align}
which gives
\begin{align}
r_k^{(\ell)*}=\sqrt{\frac{\psi_k^{(\ell)}}{\lambda_ka_\ell}}.
\label{eq:app-linear-rank}
\end{align}
Substituting \eqref{eq:app-linear-rank} into the active budget constraint gives
\begin{align}
\sum_{\ell\in\mathcal{L}_k}a_\ell r_k^{(\ell)*}
=\frac{1}{\sqrt{\lambda_k}}\sum_{\ell\in\mathcal{L}_k}\sqrt{a_\ell\psi_k^{(\ell)}}
=C_k.
\label{eq:app-budget-substitution}
\end{align}
Therefore,
\begin{align}
\frac{1}{\sqrt{\lambda_k}}
=\frac{C_k}{\sum_{\ell\in\mathcal{L}_k}\sqrt{a_\ell\psi_k^{(\ell)}}}.
\label{eq:app-lambda-solution}
\end{align}
Substituting \eqref{eq:app-lambda-solution} into \eqref{eq:app-linear-rank} gives
\begin{align}
r_k^{(\ell)*}
=\frac{C_k\sqrt{\psi_k^{(\ell)}/a_\ell}}{\sum_{\ell'\in\mathcal{L}_k}\sqrt{a_{\ell'}\psi_k^{(\ell')}}}.
\label{eq:app-closed-form-allocation}
\end{align}
Under the quadratic payload model \(b_\ell>0\), the optimizer is characterized by the KKT condition \eqref{eq:app-kkt-allocation}, equivalently
\begin{align}
\frac{\psi_k^{(\ell)}}{(r_k^{(\ell)*})^2}
=\lambda_k\left(a_\ell+2b_\ell r_k^{(\ell)*}\right),
\label{eq:app-quadratic-kkt-final}
\end{align}
where \(\lambda_k>0\) is chosen so that the payload constraint holds with equality.
\end{proof}
Theorem~\ref{thm:allocation} establishes the monotone allocation structure used by MESH-FL: for fixed payload coefficients, layers with larger spectral-complexity weight receive larger rank under the client budget. The practical rule \(r_k^{(\ell)}\propto e^{\widehat{H}_k^{(\ell)}/2}\) is obtained by using \(e^{\widehat{H}_k^{(\ell)}}\) as a low-cost online proxy for the spectral-complexity ordering in \eqref{eq:closed-form}, followed by budget scaling via bisection and integer feasibility correction.


\section{Experimental Results}
\label{sec:experiments}
\subsection{Experimental Setup}

The experiments were conducted on a 15-node heterogeneous cluster consisting of 5 Raspberry Pi 4 Model B (quad-core Cortex-A72 at 1.8 GHz, 8 GB LPDDR4, 64 GB storage, batch size 16) and 10 Raspberry Pi 5 (quad-core Cortex-A76 at 2.4 GHz, 16 GB LPDDR4X, 128 GB storage, batch size 32), all connected to a central server via a local Ethernet switch. Pi 4 nodes were assigned higher compression targets than Pi 5 nodes to reflect their tighter uplink capacity.

We used AV-MNIST, a dataset combining MNIST images~\cite{lecun1998mnist} with spoken-digit recordings from FSDD~\cite{jakobovski2018fsdd}. Each sample consists of a $28 \times 28$ grayscale image and a 1,000-dimensional MFCC feature vector (20 coefficients, 8 kHz, hop length 160). The dataset includes 60,000 training and 10,000 test pairs across 10 classes. The dataset is partitioned non-IID using a Dirichlet distribution with concentration parameter \(\alpha=0.1\), following the protocol introduced in~\cite{hsu2019measuring} for controlling label-distribution skew in FL.

Each client was randomly assigned a modality profile (image-only, audio-only, or multimodal), resulting in heterogeneous participation patterns across training rounds. Image-only clients updated only the image encoder, audio-only clients updated the audio encoder, and multimodal clients updated both encoders and the shared fusion module.

\begin{figure*}[t]
  \centering
  \includegraphics[width=0.95\textwidth]{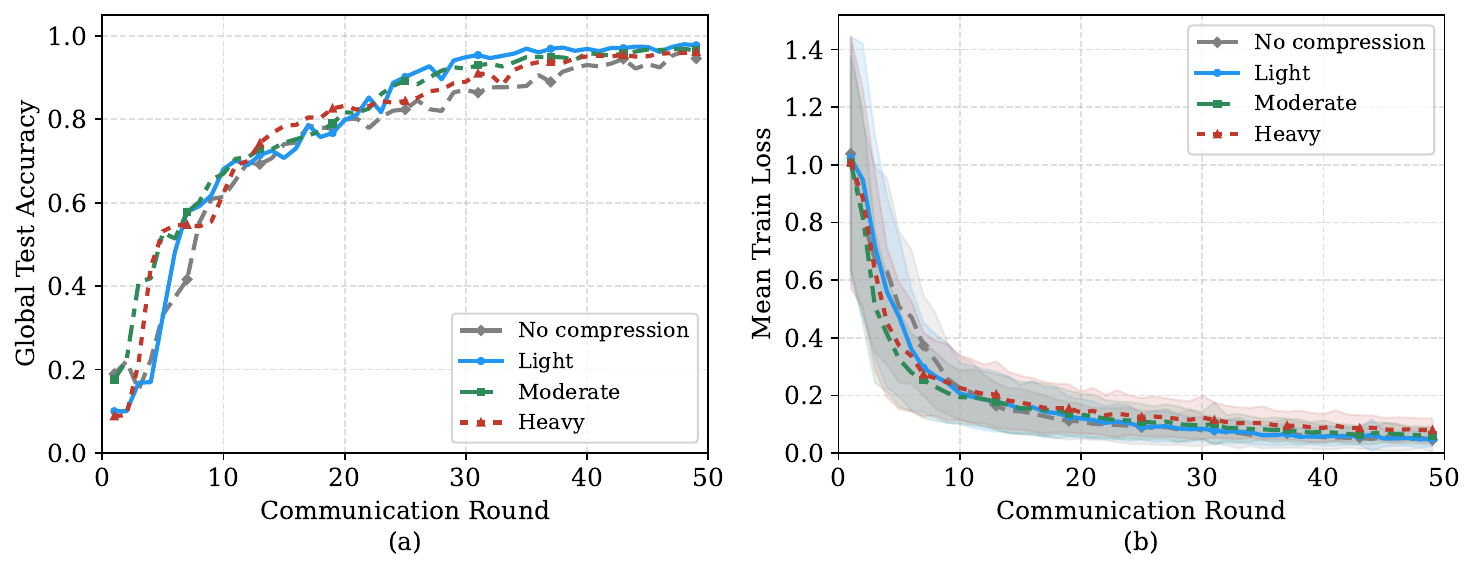}
  \caption{Global test accuracy and mean train loss ($\pm$ std across
    clients) over 50 rounds on AV-MNIST ($\alpha=0.1$).}
  \label{fig:convergence}
\end{figure*}
The global model consisted of a CNN image encoder, an MLP audio encoder, and a shared two-layer fusion module producing a 10-class output. The experiments ran for $T = 50$ rounds with full participation and $E = 5$ local SGD steps (lr = 0.01). Three compression levels were evaluated: light compression targeting $2$–$10\times$ for Pi 5 and $5$–$10\times$ for Pi 4; moderate compression targeting $30$–$50\times$ for Pi 5 and $40$–$60\times$ for Pi 4; and heavy compression targeting $40$–$60\times$ for Pi 5 and $60$–$80\times$ for Pi 4. These levels produced fleet-averaged compression ratios of $9.36\times$, $40.52\times$, and $56.82\times$, respectively. The entropy truncation parameter was fixed at $q = 10$. Baselines included standard FedAvg~\cite{mcmahan2017communication} without compression, TopK sparsification ($s = 1\%$) with error feedback~\cite{lin2018dgc}, QSGD with 4-bit quantization~\cite{alistarh2017qsgd}, and PowerSGD with rank 4~\cite{vogels2019powersgd}.
\subsection{Communication Efficiency and Learning Performance}

Fig.~\ref{fig:convergence} demonstrates that all three MESH-FL compression levels converge faster and achieve higher final accuracy than the uncompressed FedAvg baseline. Specifically, light compression reaches 97.73\%, surpassing FedAvg (95.72\%) by 2.01\%, while moderate and heavy compression reach 96.47\% and 96.20\%, respectively. This result stems from the interaction between entropy-guided MPS compression and the non-IID data distribution ($\alpha = 0.1$). By projecting each layer-wise update onto its dominant spectral subspace, MESH-FL filters high-frequency, client-specific gradient components amplified in non-IID settings, acting as implicit gradient regularization. The loss curves confirm this: FedAvg shows larger cross-client variance throughout training, while compressed variants maintain tighter agreement from round 10 onward.
\begin{figure}[t]
  \centering
  \includegraphics[width=0.95\columnwidth]{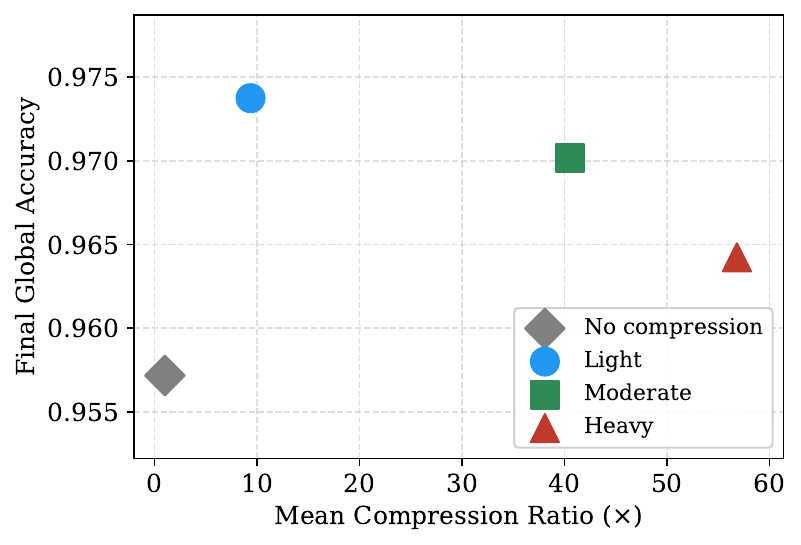}
  \caption{Final global accuracy and mean compression ratio.}
  \label{fig:tradeoff}
\end{figure}

The accuracy-compression tradeoff is illustrated in Fig.~\ref{fig:tradeoff}. Notably, the uncompressed baseline ($1.0\times$) provides the lowest accuracy (95.72\%), while every MESH-FL compression level improves both accuracy and communication efficiency simultaneously. Light compression achieves the best accuracy (97.73\%) at $9.36\times$ compression, while moderate and heavy compression trade a modest accuracy reduction (1.26 and 1.53\% below light) for substantially higher compression ratios ($40.52\times$ and $56.82\times$).
\begin{figure}[t]
  \centering
  \includegraphics[width=0.9\columnwidth]{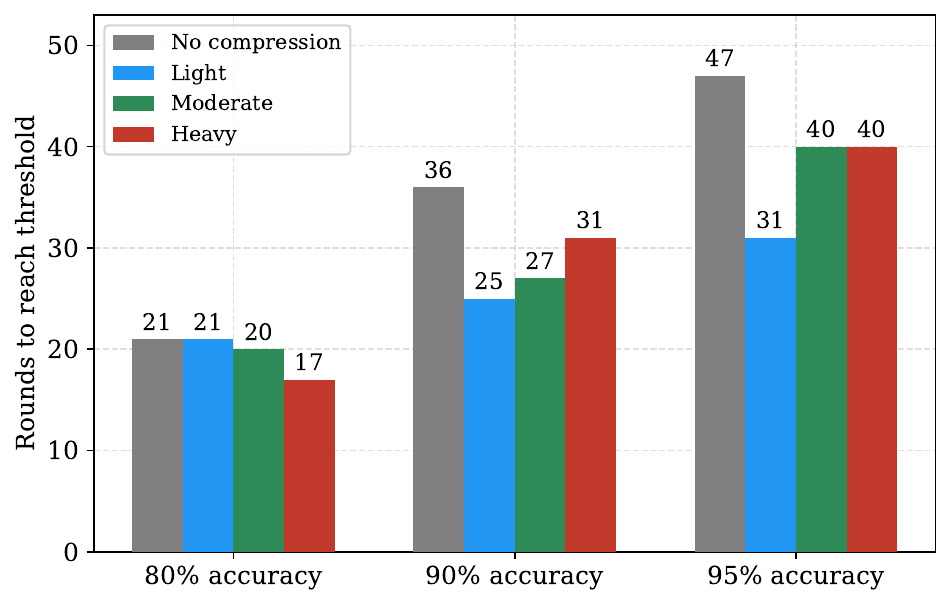}
  \caption{Communication rounds required to reach 80\%, 90\%, and 95\% accuracy thresholds under each compression level.}
  \label{fig:rounds}
\end{figure}
Fig.~\ref{fig:rounds} shows that all compressed variants reach accuracy thresholds faster than FedAvg. At 95\% accuracy, FedAvg requires 47 rounds versus 31, 40, and 40 for light, moderate, and heavy compression. The 16-round reduction for light compression, combined with its $9.36\times$ lower per-round payload, translates into a dramatic reduction in total bytes transmitted to reach convergence, as shown in Fig.~\ref{fig:savings}(b).
\begin{figure*}[t]
  \centering
  \includegraphics[width=0.99\textwidth]{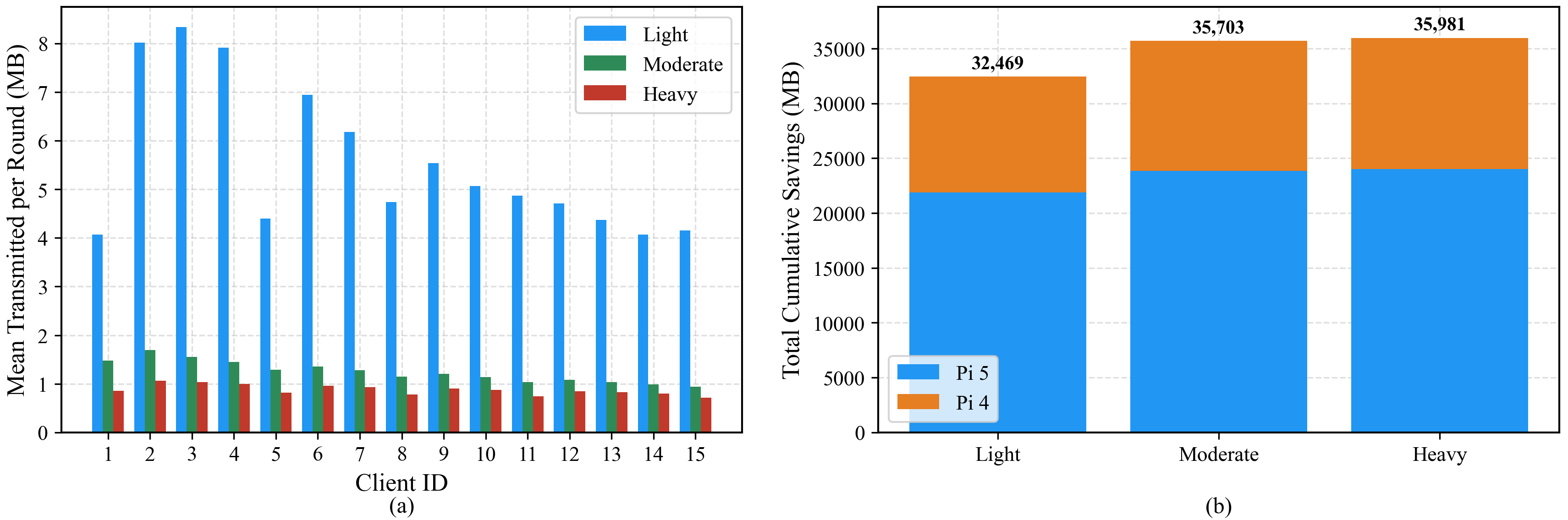}
  \caption{Per-client mean transmitted data per round and total
    cumulative savings over 50 rounds, broken down by device type.}
  \label{fig:savings}
\end{figure*}
Fig.~\ref{fig:savings} illustrates per-client payload and cumulative savings across compression levels. Multimodal clients transmit more data than unimodal clients due to their larger active layer sets. Moderate and heavy compression reduce per-client payload to 1.25~MB and 0.88~MB per round, respectively, compared with 48.85~MB under no compression. Heavy compression saves approximately 35{,}900~MB ($\approx 36$~GB) relative to the uncompressed baseline across all 15 clients over 50 rounds, with Pi~5 nodes contributing the larger share due to their higher modality participation rate.
\subsection{Comparison with Baselines}

\begin{table}[t]
\renewcommand{\arraystretch}{1.3}
\caption{Accuracy and compression on AV-MNIST ($\alpha=0.1$).}
\label{tab:accuracy}
\centering
\begin{tabular}{l|c|c|c}
\hline
\textbf{Method} & \textbf{Acc.\ (\%)} & \textbf{Ratio ($\times$)} & \textbf{Rnd@95\%} \\
\hline
FedAvg~\cite{mcmahan2017communication}       & 95.72 & 1.00  & 47 \\
TopK ($s{=}1\%$)~\cite{lin2018dgc}           & 93.4  & 10.2  & 48 \\
QSGD (4-bit)~\cite{alistarh2017qsgd}         & 92.1  & 8.0   & ${>}50$ \\
PowerSGD ($r{=}4$)~\cite{vogels2019powersgd} & 95.8  & 12.1  & 43 \\
\hline
MESH-FL Light                                & 97.73 & 9.36  & 31 \\
MESH-FL Moderate                             & 96.47          & 40.52 & 40 \\
MESH-FL Heavy                                & 96.20          & 56.82 & 40 \\
\hline
\end{tabular}
\end{table}

\begin{table}[t]
\caption{System efficiency on AV-MNIST; per-round values averaged
  across 15 clients.}
\label{tab:efficiency}
\renewcommand{\arraystretch}{1.4}
\begin{minipage}{\columnwidth}
\centering
\resizebox{\columnwidth}{!}{%
\begin{tabular}{l|c|c|c|c}
\hline
\textbf{Method} & \textbf{MB/rnd} & \textbf{Comp.\ (s)} & \textbf{Trans.\ (s)} & \textbf{Total TX (GB)} \\
\hline
FedAvg      & 48.85 & 0.01 & 4.89 & 33.63 \\
TopK ($s{=}1\%$)       & 4.79  & 0.02 & 0.48 & 3.37  \\
QSGD (4-bit)        & 6.11  & 0.02 & 0.61 & 4.54  \\
PowerSGD ($r{=}4$)& 4.04  & 0.18 & 0.40 & 2.54  \\
\hline
MESH-FL Light                                & 5.56  & 0.22 & 0.56 & 2.52  \\
MESH-FL Moderate                             & 1.25  & 0.31 & 0.12 & \textbf{0.73}  \\
MESH-FL Heavy                                & 0.88  & 0.38 & 0.09 & \textbf{0.51}  \\
\hline
\end{tabular}}
\end{minipage}
\end{table}

Table~\ref{tab:accuracy} compares accuracy and compression efficiency across all methods. MESH-FL light achieves the highest accuracy (97.73\%) at $9.36\times$ compression, outperforming even the uncompressed FedAvg baseline. PowerSGD is the strongest baseline at 95.8\% accuracy, but reaches only $12.1\times$ compression and still falls 1.93\% below MESH-FL light. Both QSGD (92.1\%) and TopK (93.4\%) underperform FedAvg under the strongly non-IID regime: QSGD suffers from quantization noise that compounds with gradient dissimilarity, while TopK transmits only 1\% of gradient coordinates per round, disrupting layer-wise gradient structure across heterogeneous modality branches.

\begin{figure*}[b]
  \centering
  \includegraphics[width=0.99\textwidth]{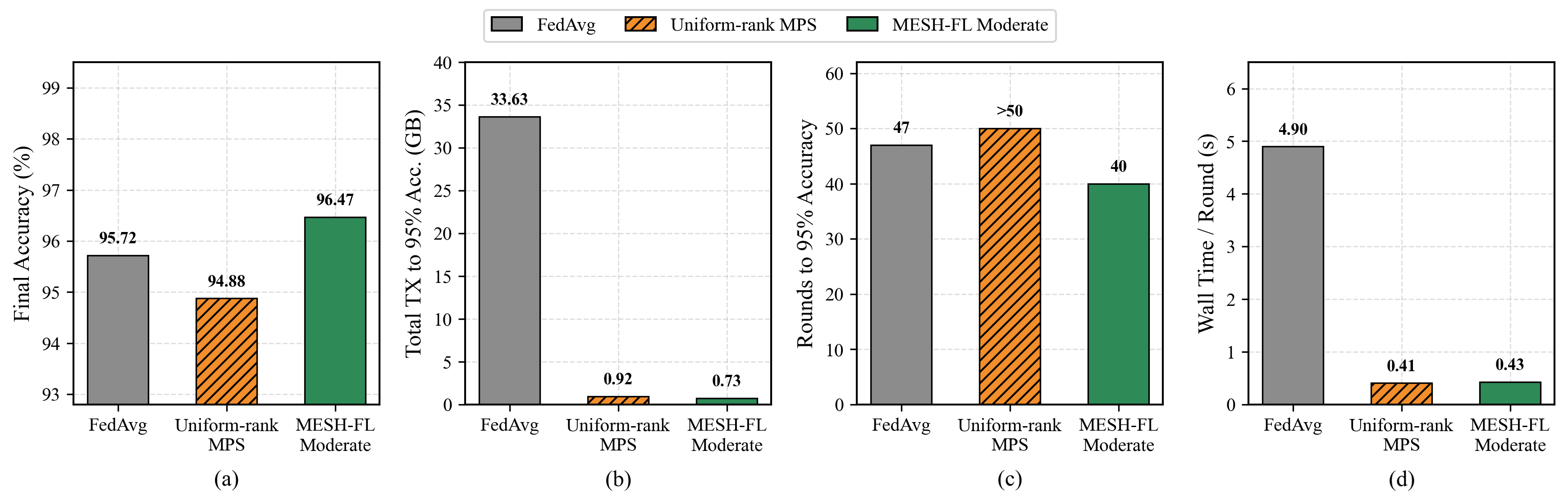}
  \caption{Entropy-guided and uniform-rank MPS under identical moderate budgets.}
  \label{fig:ablation}
\end{figure*}

Table~\ref{tab:efficiency} provides the full runtime breakdown. The Comp.\ column captures client-side SVD and MPS overhead, consistent with the $\mathcal{O}(\sum_\ell m_\ell n_\ell q)$ complexity of Section~\ref{sec:system}-F. Although MESH-FL adds 0.22--0.38~s of computation per round compared to the near-zero overhead of TopK and QSGD, it more than compensates through transmission savings. The net per-round wall time (Comp.\ $+$ Trans.) is 0.78~s, 0.43~s, and 0.47~s for light, moderate, and heavy compression, all well below FedAvg (4.90~s) and below TopK (0.50~s) and QSGD (0.63~s) at the moderate and heavy levels. Most critically, the total data transmitted (Total TX) column shows that MESH-FL heavy requires only 0.51~GB to reach 95\% accuracy versus 33.63~GB for FedAvg ($66\times$ reduction) and 2.54~GB for the next-best PowerSGD ($5\times$ reduction).

\subsection{Ablation Study}

To isolate the impact of entropy-guided rank allocation, Fig.~\ref{fig:ablation} compares MESH-FL moderate with a uniform-rank MPS baseline under identical per-client payload budgets. MESH-FL reaches 96.47\% final accuracy, outperforming uniform-rank MPS (94.88\%) by 1.59\% and FedAvg (95.72\%) by 0.75\%, despite using the same $40.52\times$ compression budget. It also reaches 95\% accuracy within 40 rounds. In contrast, uniform-rank MPS fails to reach this threshold within 50 rounds, reducing Total TX to 0.73~GB compared with at least 0.92~GB for uniform-rank MPS and 33.63~GB for FedAvg. Since both MPS variants have nearly identical per-round wall-clock times, namely 0.43~s for MESH-FL and 0.41~s for uniform-rank MPS, the improvement cannot be attributed to computational differences. The gain is most visible during rounds 5--20 under $\alpha=0.1$, where entropy guidance allocates higher ranks to diffuse fusion layers spectrally and fewer ranks to low-entropy encoder layers. This ablation confirms that the gain results from more effective layer-wise rank allocation, not additional communication or computation.

\section{Conclusion}\label{sec:conclusion}

In this paper, we presented MESH-FL, an entropy-guided MPS update-compression framework for FL under joint modality and device heterogeneity. By projecting layer-wise updates onto dominant spectral subspaces, MESH-FL allocates compression ranks according to spectral entropy, solving a convex surrogate rank-allocation problem while preserving the standard non-convex FL convergence rate. Experiments on a 15-node Raspberry Pi 4/5 cluster showed up to $56.8\times$ compression, up to a 2.01\% accuracy improvement over uncompressed FedAvg under strongly non-IID data ($\alpha = 0.1$), and up to a $66\times$ reduction in transmitted data to convergence. These results demonstrate that entropy-guided rank allocation can reduce communication cost while maintaining, and in some cases improving, model accuracy. The main limitations are that the exponential tail-decay assumption has not been verified across all layers and training regimes, and the observed accuracy gain lacks a formal theoretical explanation. Future work will extend the three-core MPS design to higher-order tensor-train decompositions, combine entropy-guided compression with asynchronous aggregation, and study when low-rank gradient compression improves generalization in non-IID FL.


\bibliographystyle{IEEEtran}
\bibliography{refs}

\end{document}